\documentclass{article}

\usepackage[final,nonatbib]{neurips_2022}

\usepackage{comment}
\usepackage{microtype}
\usepackage{graphicx}
\usepackage{subfigure}
\usepackage{booktabs} 

\usepackage{hyperref}

\usepackage{amsmath}
\usepackage{amssymb}
\usepackage{mathtools}
\usepackage{amsthm}
\usepackage{algorithm}
\usepackage{algorithmic}
\usepackage{hyperref}
\usepackage{url}
\usepackage{stmaryrd}
\usepackage{graphicx}
\usepackage{makecell}

\usepackage{amssymb}
\usepackage{xspace}
\usepackage{lipsum}
\usepackage{mathtools}
\usepackage{cuted}
\usepackage{caption}
\captionsetup{font=footnotesize}

\usepackage[capitalize,noabbrev]{cleveref}

\usepackage{amsmath}
\usepackage{amssymb}
\usepackage{mathtools}
\usepackage{amsthm}
\usepackage{microtype}
\usepackage{graphicx}
\usepackage{subfigure}
\usepackage{booktabs} 

\usepackage{hyperref}

\usepackage[utf8]{inputenc} 
\usepackage[T1]{fontenc}    
\usepackage{hyperref}       
\usepackage{url}            
\usepackage{booktabs}       
\usepackage{amsfonts}       
\usepackage{nicefrac}       
\usepackage{microtype}      
\usepackage{xcolor}         
\usepackage{wrapfig}

\usepackage{amssymb}
\usepackage{xspace}
\usepackage{lipsum}
\usepackage{mathtools}
\usepackage{cuted}
\usepackage{caption}
\captionsetup{font=footnotesize}

\usepackage[capitalize,noabbrev]{cleveref}
\usepackage{placeins}
\usepackage[resetlabels,labeled]{multibib}
\newcites{A}{References}

\theoremstyle{plain}
\newtheorem{theorem}{Theorem}[section]

\theoremstyle{definition}
\newtheorem{definition}[theorem]{Definition}

\theoremstyle{remark}

\let\emptyset\varnothing

\newcommand{\Tabref}[1]{Table~\ref{#1}}

\newcommand{\Figref}[1]{Figure~\ref{#1}}

\def\x{x}
\def\tx{\tilde{x}}
\def\tm{\tilde{m}}
\def\tc{\tilde{c}}
\def\tq{\tilde{q}}
\def\G{\mathcal{G}}
\def\R{\mathcal{R}}

\def\C{\mathcal{C}}
\def\M{\mathcal{M}}

\definecolor{ourRed}{HTML}{E24A33}
\definecolor{ourBlue}{HTML}{348ABD}
\definecolor{ourPurple}{HTML}{988ED5}
\definecolor{ourGray}{HTML}{777777}
\definecolor{ourLightGray}{HTML}{B8B8B8}
\definecolor{ourYellow}{HTML}{FBC15E}
\definecolor{ourGreen}{HTML}{4D8951}
\definecolor{ourPink}{HTML}{FFB5B8}
\definecolor{oursteelblue}{HTML}{9BB8D7}
\definecolor{ourOrange}{HTML}{FDBA58}
\definecolor{ourWhite}{HTML}{FAFAFA}

\newcommand{\xhdr}[1]{\vspace{2mm}\noindent{{\bf #1.}}}
\newcommand{\xhdrx}[1]{\vspace{0mm}\noindent{{\bf #1.}}}

\newcommand{\proj}{ZeroC\xspace}
\newcommand{\projs}{HC-EBMs\xspace}
\newcommand{\cebm}{HC-EBM\xspace}
\newcommand{\cebms}{HC-EBMs\xspace}
\newcommand{\rebm}{R-EBM\xspace}
\newcommand{\rebms}{R-EBMs\xspace}

\newcommand{\projfull}{Zero-shot Concept Recognition and Acquisition\xspace}

\title{ZeroC: A Neuro-Symbolic Model for \\
Zero-shot Concept Recognition
and Acquisition \\
at Inference Time}

\author{%
 Tailin Wu\\
  Department of Computer Science\\
  Stanford University\\
  \texttt{tailin@cs.stanford.edu} \\
  \And
  Megan Tjandrasuwita\\
  Department of Computer Science\\
  California Institute of Technology\\
  \texttt{megantj@mit.edu} \\
  \And
   Zhengxuan Wu\\
  Department of Computer Science\\
  Stanford University\\
  \texttt{wuzhengx@cs.stanford.edu} \\
  \And
   Xuelin Yang\\
  Department of Computer Science\\
  Stanford University\\
  \texttt{xyang23@cs.stanford.edu} \\
  \And
   Kevin Liu\\
  Department of Computer Science\\
  Stanford University\\
  \texttt{liuk@cs.stanford.edu} \\
  \And
   Rok Sosi\v{c}\\
  Department of Computer Science\\
  Stanford University\\
  \texttt{rok@cs.stanford.edu} \\
   \And
    Jure Leskovec\\
    Department of Computer Science\\
  Stanford University\\
  \texttt{jure@cs.stanford.edu} \\
}

\begin{document}

\maketitle

\begin{abstract}
Humans have the remarkable ability to recognize and acquire novel visual concepts in a zero-shot manner. Given a high-level, symbolic description of a novel concept in terms of previously learned visual concepts and their relations, humans can recognize novel concepts without seeing any examples. Moreover, they can acquire new concepts by parsing and communicating symbolic structures using learned visual concepts and relations. Endowing these capabilities in machines is pivotal in improving their generalization capability at inference time.
In this work, we introduce \projfull (\proj), a neuro-symbolic architecture that can recognize and acquire novel concepts in a zero-shot way.  \proj represents concepts as graphs of constituent concept models (as nodes) and their relations (as edges). To allow inference time composition, we employ energy-based models (EBMs) to model concepts and relations. We design \proj architecture so that it allows a one-to-one mapping between a symbolic graph structure of a concept and its corresponding EBM, which for the first time, allows acquiring new concepts, communicating its graph structure, and applying it to classification and detection tasks (even across domains) at inference time. We introduce algorithms for learning and inference with \proj. We evaluate \proj on a challenging grid-world dataset which is designed to probe zero-shot concept recognition and acquisition, and demonstrate its capability.
\footnote{
Project website and code can be found at \url{http://snap.stanford.edu/zeroc/}.}.

\end{abstract}

\section{Introduction}

Humans learn in diverse ways. Besides learning from demonstrations of a novel concept, humans can also learn concepts on a high-level. Consider learning the ``rectangle'' concept, for example. Suppose that one has never seen such a concept, but has already mastered the concept of ``line'' and relations of ``parallel'' and ``perpendicular''; s/he can easily master the ``rectangle'' concept when told that a ``rectangle'' consists of two pairs of ``lines'', the lines \emph{within} the pairs are ``parallel,'' and the lines \emph{between} the pairs are ``perpendicular''. Then s/he can directly use this newly mastered concept to recognize ``rectangles'' in novel images. 

Such zero-shot concept \emph{recognition} capability is still beyond the reach of deep learning models, which require many examples (as in typical supervised learning), or many tasks drawn from the same distribution (as in few-shot learning) to learn a novel concept. Moreover, along with the above symbolic to instance direction, humans can easily do the \emph{reverse}. Suppose again that we haven't seen the ``rectangle'' concept, but have already mastered concepts of ``line'' and relations of ``parallel'' and ``perpendicular''. When seeing a novel image which contains an instance of a rectangle, humans can easily decompose it into its constituent concepts of ``lines'' and their relational graph structure (from instance to symbolic). Moreover, humans can then communicate with each other about this new concept, allowing the transfer of knowledge in a high-level and in a succinct way. Such zero-shot concept \emph{acquisition} capability at inference time is beyond the reach of today's machine learning and AI systems. Above all, endowing the above two capabilities of zero-shot concept recognition and acquisition capability to machines at inference time, will allow them to tackle more diverse tasks.

\begin{figure}[t]
\centering
    \includegraphics[width=\textwidth]{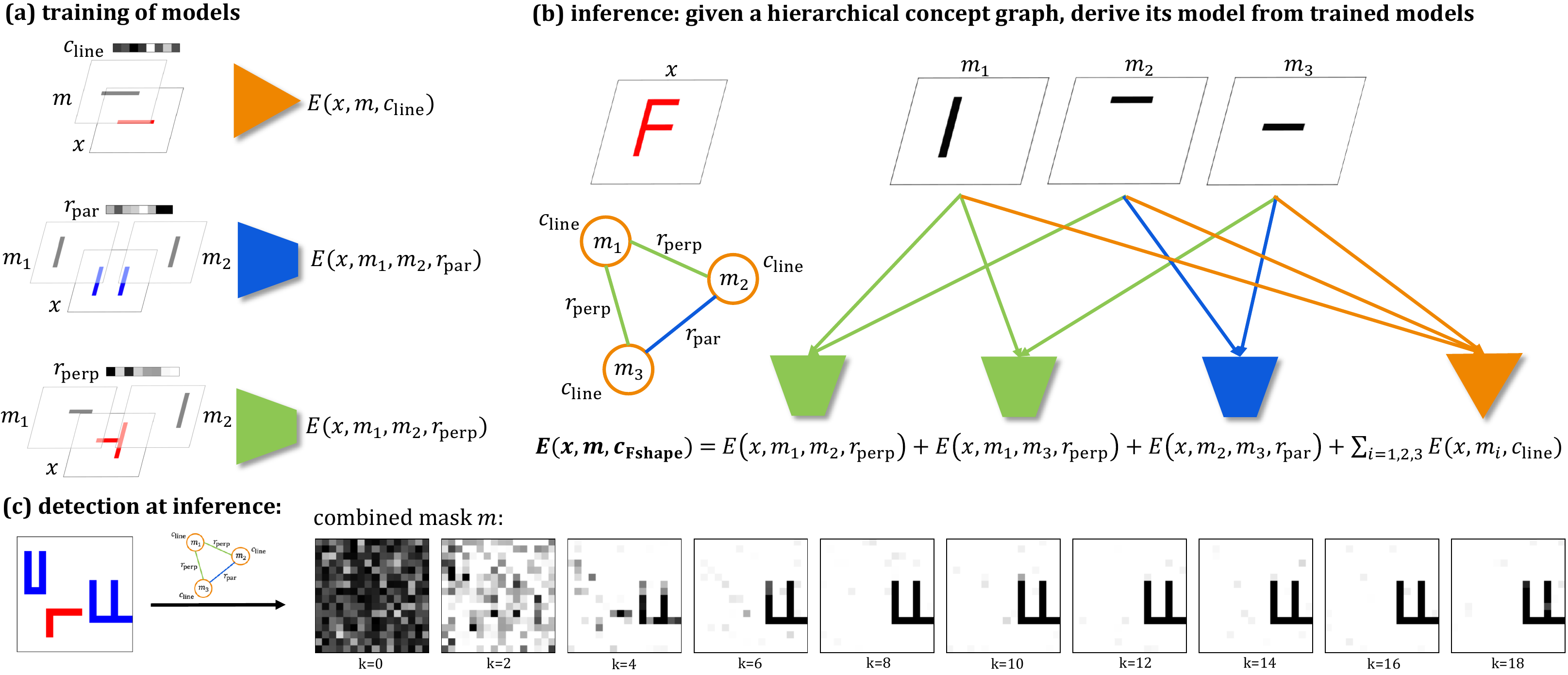}
    \caption{{\bf Composition of Concept Models.} We demonstrate the model composition for a novel, hierarchical concepts on the recognition of letter F. {\bf (a)} During training, we learn the models for constituent concepts, the concept "line" in this case, and relations, which are "parallel" and "perpendicular". {\bf (b)} During inference, we take the concept graph of F and use it to derive the model for F from the models of its constituents. Note that no training is performed on the hierarchical concept F. {\bf (c)} Example of detecting hierarchical concept on pixel level with \proj during inference in Sec. \ref{sec:experiments}, using Eq. (\ref{eq:SGLD_m}).
    }
    \label{fig:method}
    
\end{figure}

Past efforts have attempted to address aspects of the above problem of recognizing and acquiring novel concepts at inference time in a zero-shot manner. An essential component is compositionality, \textit{i.e.} the ability to compose new concepts from elementary ones. \cite{du2020compositional} and \cite{higgins2017scan} introduce techniques for compositional generation in the context of energy-based models (EBMs) and variational autoencoders (VAEs), respectively. However, these works only consider composing factors of variation (\textit{e.g.}, color, position, smiling, young). In order to recognize concepts that consist of constituent concepts as parts, the structure of objects and their relationships are equally important. 
For instance, given 4 ``line'' concepts, without specifying their relational graph structure, they can easily form a trapezoid or a generic parallelogram. \cite{shanahan2020explicitly} designed a transformer-based relational architecture for learning explicit relations, and showed that it can generalize to new objects for the relation. However, this architecture cannot generalize to concepts with more complex relational graph structure. 

It remains an open problem of how to design an architecture that has the capability to compose novel concepts at inference time, based on its internal relational graph structure of prior learned concepts and relations. A separate line of works \cite{romera2015embarrassingly,jiang2017learning,bucher2017generating,zhao2018zero,schonfeld2019generalized} tackle zero-shot classification, but are limited to novel composition of label features for concepts, and ignore relation features as well as the graph structure formed by concepts and relations.

Here we introduce our \proj architecture, which is a model designed towards zero-shot concept recognition and acquisition. It models a visual concept or relation as an Energy-Based Model (EBM) with 3 inputs: an image, a mask (or two masks for relation) denoting a set of pixels within the image, and a concept/relation label string. It returns a low scalar energy value if the mask(s) correctly indicates the concept or relation within the image, and a high energy value otherwise.
After training, the concept and relation EBMs are able to perform classification of concept/relation labels given an image, and detection of concept/relation mask(s) given an image and concept/relation label. The \emph{core} contribution of this work is \proj's architecture and inference algorithms that allow a one-to-one correspondence between a specification of a relational graph structure of a novel concept and its corresponding composed concept model that can also perform similar downstream classification and detection, which enables zero-shot concept recognition and acquisition at inference time. 

To perform zero-shot concept recognition at inference time, \proj can simply compose a new concept EBM, by summing over the constituent concept and relation EBMs according to the specified graph structure. This new concept EBM has the same input-output format as its constituent concept EBMs, allowing the same capability of classification and detection. Note that through the above process, \proj have mastered a concept without seeing an image, only symbolic instruction, thus it is called ``zero-shot''. To perform zero-shot concept acquisition, we introduce an algorithm that   given an instance (image) of a novel concept, parse it into a relational graph of prior learned concepts and relations. Such relational graph can then be transferred, even cross-domain, allowing models independently trained in a different domain to be able to zero-shot classify and detect this concept in its own domain.

There are currently no specialized benchmarks to test such zero-shot concept recognition and acquisition capabilities. Inspired by the Abstraction and Reasoning Corpus (ARC)~\cite{chollet2019measure}, we have created a synthetic, grid-world based dataset with tasks that capture the essence of the above capabilities that are deceptively simple for humans, but very hard for neural models.
We demonstrate that our model, trained to classify and detect elementary visual concepts and relations, is able to classify and detect novel concepts at inference time, being given a zero-shot symbolic graph structure. We also compare with a state-of-the-art zero-shot learning model CADA-VAE \cite{schonfeld2019generalized}. Due to that it is not designed for such graph-based composition, it significantly underperforms our \proj. 
Furthermore, we show that two independently trained models, one on 2D images and the other on 3D images, are able to acquire hierarchical concepts from each other by communicating the graph structure and then perform classification and detection of these hierarchical concepts in their respective domains.

\vspace{-1mm}
\section{Method}
\vspace{-1mm}

In this section, we describe the \proj architecture for recognizing and acquiring novel visual concepts at inference time. We give an overview of our approach, describe how it performs classification and detection, how it supports zero-shot concept recognition, how independently-trained models can acquire hierarchical concepts from each other and, finally, introduce its learning method.

\subsection{An Overview of \proj Architecture}
\label{sec:overview-architecture}

The key components of \proj are concepts and relations.
Each concept consists of a graph and an energy-based model. The concept graph describes the concept  as a composition of its constituent concepts and relations between them.
The base concepts that do not have any constituent concepts are called elementary concepts. Their graph is a singleton.
The concept energy-based model is used to recognize the concept in the input data.
Each relation is represented in \proj also with a graph and an energy-based model. The relation graph is simply an edge that connects the two related concepts. A hierarchical concept is a concept composed of constituent concepts as nodes and relations as edges according to a graph structure.

During training, energy-based models are learned from images and concept or relation labels.
During inference, the learned models are used to recognize concepts seen during training or more complex hierarchical concepts which were not seen before.
For concepts seen during training, their learned models are applied.
For new hierarchical concepts not seen during training (\textit{i.e.}, without learned model), their models are derived from the graph of the new hierarchical concept and energy-based models of their constituent concepts and relations.
Concepts and relations in \proj can be viewed as templates for objects and their connections, which then get grounded during inference with a specific image, where those objects and relations are assigned actual values.
\proj also handles object masks, which indicate object locations in the image.

\subsection{\projfull}
\label{sec:architecture}

Formally, we model a concept with $E_{X,M,C}(x,m,c)$\footnote{We will use capital letters (\textit{e.g.} $X, M$) to represent random variables, and small ones (\textit{e.g.} $x, m$) to represent their instances.}, which maps an image $x\in \mathbb{R}^D$, a mask $m\in[0,1]^D$ and a concept label\footnote{The concept label $c$ is a categorical variable, that we use to refer to a concept, like ``line'', ``rectangle'', or ``cat''. In the following, if without confusion, we will refer to a concept using its label $c$.} $c$ to a scalar energy.
Similarly, we model relations with $E_{X,M_1,M_2,R}(x,m_1,m_2,r)$, where $m_1$ and $m_2$ are a pair of masks indicating two objects in the image $x$, and $r$ is a relation label between the objects. 
The concept and relation models have a probabilistic interpretation. For example, the energy function $E_{X,M,C}(x,m,c)$ corresponds to the joint probability of 
$$P_{X,M,C}(x,m,c)=\frac{1}{Z_C}\text{exp}\left(-E_{X,M,C}(x,m,c)\right)$$
where $Z_C=\sum_{c\in C}\int e^{-E_{X,M,C}(x,m,c)}\;dx\;dm$ is a normalizing constant. Therefore, if the mask $m$ is actually masking an object that belongs to concept $c$ in image $x$, then $P_{X,M,C}(x,m,c)$ will be high and $E_{X,M,C}(x,m,c)$ will be low, and vice versa.
Essentially, the models define an energy landscape for their respective multi-modal inputs that give low energy if the mask correctly identifies its corresponding concept. 

Next, we show how \proj performs detection, classification, and models hierarchically composed concepts. We use the term \cebm to denote concept models and \rebm for relation models.

\xhdrx{Detection} We want to infer the location mask $m$ of the concept $c$, given image $x$. Probabilistically, we are computing $P_{M|X,C}(m|x,c)$. To perform detection, we employ Stochastic Gradient Langevin Dynamics (SGLD)  to sample  masks on the energy landscape \cite{du2019implicit}:
\begin{equation}
\label{eq:SGLD_m}
\tilde{m}^{k}=\tilde{m}^{k-1}-\frac{\lambda}{2}\nabla_{m}E_{X,M,C}(x,\tilde{m}^{k-1},c)+\omega^{k}, \ \omega^{k}\sim \mathcal{N}(0,\lambda)
\end{equation}

where $\tilde{m}^k$ is the inferred mask at the $k$th iteration, $k=1,2,...K$. Applying \cite{welling2011bayesian}, as $K\to+\infty$ and $\lambda\to 0$, we generate samples from the distribution of $P_{M|X,C}(m|x,c)=\frac{1}{Z_{x,c}}\text{exp}(-E_{X,M,C}(x,m,c))$, where $Z_{x,c}=\int \, \text{exp}(-E_{X,M,C}(x,m,c))\;dm$ is a normalizing constant. In practice, we use a finite $K$ to generate samples\footnote{In this paper, we use $i$ to index different concepts, $j$ to index relations, and $n$ to index example images.} $\tilde{m}^K_n, n=1,2,...N$ given $x, c$.

\xhdrx{Classification} We want to determine whether the concept $c$ appears in a given image $x$, \textit{i.e.} compute $P_{C|X}(c|x)$. We need to marginalize over the mask $m$:
\vspace{-1mm}
\begin{align*}
P_{C|X}(c|x)=&\frac{P_{X,C}(x,c)}{P_X(x)}=\frac{\int \, P_{X,M,C}(x,m,c) \;dm}{\sum_{c\in C}\int \,P_{X,M,C}(x,m,c)\;dm}\\
=&\frac{\int \, \text{exp}\left(-E_{X,M,C}(x,m,c)\right)\;dm}{\sum_{c\in C}\int \, \text{exp}\left(-E_{X,M,C}(x,m,c)\right) \;dm}
\vspace{-2mm}
\end{align*}
We again use SGLD \cite{du2019implicit} in Eq. (\ref{eq:SGLD_m}) to generate $N$ samples $\tilde{m}^K_n, n\in[N]=\{1,2,...N\}$ given $x,c$, and approximate the above integral using maximum a posteriori (MAP) estimation:

\vspace{-4mm}
\begin{align}
\label{eq:classify}
P_{C|X}(c|x)\simeq\frac{\max_{n\in[N]} \text{exp}\left(-E_{X,M,C}(x,\tilde{m}^K_n,c)\right)}{\sum_{c\in C}\max_{n\in[N]} \text{exp}\left(-E_{X,M,C}(x,\tilde{m}^K_n,c)\right)}
\end{align}

In practice, we only need to  find the concept with the highest value in the numerator.

\xhdrx{Zero-shot recognition of novel concepts}
To master a novel hierarchical concept and directly use it for  classification and detection only given its relational graph structure, we need a way to compose the previous concept and relation energy based models. Here we introduce the hierarchical composition rule, using an English letter ``F'' as an illustrative example\footnote{For clarity, the concept graph shown has been simplified from the real model used in the experiments.}. 

The  concept F has one constituent concept, a "line", and two relations, "parallel" and "perpendicular". The models for constituents are learned during training. During inference, these models are combined, using the concept graph, into a combined energy model for the letter F. Essentially, the models of all the recognized constituent concepts and relations are added together. Note that  although F contains three lines, only one "line" model is needed. The concept graph acts as a template and recognized line instances (objects) are matched with nodes in the template to obtain actual models. The "line" model is instantiated three times with specific values for the three identified lines and the three models plus their corresponding relations models are used to derive the hierarchical model. In addition to the models, we also need to combine the masks of all the recognized constituent objects.

Formally, we define the following composition rule for \cebms plus the masks to be combined.
\begin{definition}
\label{def:composition_rule}\textbf{Hierarchical Composition Rule:}  Let a hierarchical concept $c$ have graph $\G=(\C,\R)$, where $\C$ are constituent concept nodes and $\R$ are relation edges. During inference, these nodes and edges are matched to recognized concept objects and their relationships, which provides their models and masks. The combined model $E_{X,M,C}(x,m,c)$ is then a sum of the models for all the nodes and edges in the graph and the mask $M$ is the maximum of all the masks:
\vspace{-0.5mm}
\begin{align}
\label{eq:composition}
&E_{X,M,C}(x,m,c)=\sum_{c_i\in\mathcal{C}} E_{X,M_i,C}(x,m_i,c_i)+\sum_{r_j\in\mathcal{R}} E_{X,M_{j1},M_{j2},R}(x,m_{j1},m_{j2},r_j)\\
&M:=\text{max}\{\{M_i\}, \{M_{j1}\}, \{M_{j2}\}\}\nonumber
\end{align}

\end{definition}

\subsection{Zero-shot Concept Acquisition at Inference Time}
\label{sec:transfer}

Here, we introduce techniques for acquiring novel concept graph, which is useful for sharing high-level knowledge transfer between independently trained models. Such capability may have implications in future scenarios where a hypothetical self-driving car communicating the structure of a novel road sign to other cars on the road, or a 2D vision model learning a novel object and transferring it to a more accurate 3D depth model for inference. To achieve this, we need to establish an equivalence relation between  concepts in different domains (\textit{e.g.} in 2D and 3D images).

\begin{definition}
\textbf{Structural Equivalence for \cebm:} Two \projs are structurally equivalent, if their graphs are isomorphic, and their constituent \cebms are recursively structurally equivalent.
\end{definition}
\vspace{-1mm}
For example, \cebms for a ``rectangle'' concept in 3D and 2D images are structurally equivalent if they have the same decomposition into two ``parallel-line'' \cebms connected by a ``connect'' \rebm, and the ``parallel-lines'' have the same decomposition into two ``line'' \cebms connected by a ``parallel'' \rebm. Even though the \cebms are grounded in different domains, they have the same abstract structure, so they represent the same hierarchical concept.

With structural equivalence, independently-trained \cebms from different domains can acquire hierarchical concepts via the graph structure. The algorithm works as follows (see Appendix~\ref{app:acquire}): the image is parsed to decompose it into a graph of concepts and relations previously learned by HC-EBM and R-EBM, then the graph is used in a different domain to compose its HC-EBM for the new concept. The key step of this process is the first, parsing step.

The parsing step can be seen as an inverse of the Hierarchical Composition Rule and is critical in allowing the model to recognize novel hierarchical concepts in its domain using prior-learned concepts and relations from other domains, thus facilitating downstream knowledge transfer between domains. Alg. \ref{alg:parsing} provides further details. Steps 1-2 infer the concept instances in image $x$ using SGLD (Eq. \ref{eq:SGLD_m}). Here the energy $E^{(p)}(x,\M_C,c)$ is the summation of HC-EBMs on independent masks, each mask $m_{i_l}$ representing a potential concept instance belonging to concept $c_i$. $E^{\text{(overlap)}}=\text{max}\left(0, \sum_{m_{i_n}\in \M_C} m_{i_n}-1\right)$ penalizes overlapping masks. Since the image probably contains fewer concept instances than given, some masks are empty with all-zero values; Step 3 removes these. Step 4 classifies relations (using Eq. \ref{eq:classify}) between pairs of detected concept instances. Step 5 combines the detected concept instances and their relations to build the graph $\G$ for this new hierarchical concept $c$.

\begin{algorithm}[t]
\caption{\textbf{Parsing Hierarchical Concept From Image}}
\label{alg:parsing}
\begin{algorithmic}
\STATE {\bfseries Require:} \cebm $E^{(C)}$ with prior-learned concepts $\{c_1,c_2,...c_I\}$, R-EBM model $E^{(R)}$ with prior-learned relations $\{r_1,r_2,...r_J\}$.
\STATE {\bfseries Require:} Image $x$, containing unseen hierarchical concept $c$.
\STATE {\bfseries Require:} Maximum number of instances $N_i$ for each concept $c_i$, $i=1,2,...I$.

\STATE 1: $E^{(p)}(x,\M_C,c):=\sum_{i=1}^I \sum_{n=1}^{N_i}E^{(C)}(x,m_{i_n},c_i)$
\STATE  $\ \ \ \ +\lambda_1 E^{\text{(overlap)}}(\M_C)$, where $\M_C:=\{m_{i_n}\},i_n=$ 
\STATE \ \ \ \  $1,...N_i\ \text{for} \ i=1,...I$, is the set of all masks.
\STATE 2: $\M_C\gets \textbf{SGLD}_{\M_C}(E^{(p)}(x,\M_C,c))$ \ \ // \textit{Using Eq. \ref{eq:SGLD_m}}
\STATE 3: $\M_C\gets$ \text{Remove-empty-masks}$(\M_C)$
\STATE 4: $\R\gets\{r_{j_1j_2}| \textbf{Classify}(E^{(R)};x,m_{j_1},m_{j_2}), \forall (m_{j_1},m_{j_2})\in\M_C\}$ \ \ // \textit{Using Eq. \ref{eq:classify} to classify}
\STATE 5: $\G\gets\text{Build-Graph}(\M_C,\R)$
\STATE 6: \textbf{return} $\G$ 
\end{algorithmic}
\end{algorithm}

\subsection{Learning}
\label{sec:learning}

Since a standard EBM-training method yields poor performance as shown later in  our experiments, we describe here our approach to train the models. Further details of our training algorithm are given in Alg. \ref{alg:learning} in Appendix \ref{app:learning}. We are optimizing the following objective:
\begin{align}
\label{eq:objective}
L=\frac{1}{N}\sum_{n=1}^N\big(L^\text{(Improved)}_n +\alpha_\text{pos-std}L_n^\text{(pos-std)}+\alpha_\text{em}L^\text{(em)}_n+\alpha_\text{neg}L^\text{(neg)}_n\big)
\end{align}

The  expression of each term is given in Appendix \ref{app:regularizations}. The first loss $L^\text{(Improved)}$
is the objective proposed in \cite{du2020improved}, a state-of-the-art EBM training technique. However, to enable the challenging zero-shot concept recognition and acquisition, we need more suitable inductive biases. The next three terms inject the right inductive bias for the task. $L^\text{(pos-std)}$ makes sure that the positive energy have similar level, so the composed concept EBM can identify all constituent concepts and relations, without one constituent EBM having too low energy and only recognizing it. The empty-mask regularization $L^\text{(em)}$ makes sure that the when the mask is empty, its energy is between the positive and negative energy, which we prove in Appendix \ref{app:empty_mask_proof} that it is the necessary condition for correctly discover the underlying graph in Sec. \ref{sec:transfer}. The $L^\text{(neg)}$ additionally provide additional negative examples, encouraging discovering full concept instead of a part of it (see Appendix~\ref{app:regularizations} for details).

\section{Experiments}
\label{sec:experiments}

In this section, we set out to answer the following two questions: (a) given a specification of graph structure for a novel hierarchical concept, can the composed concept model successfully perform classification and detection tasks? (b) Can HC-EBM acquire a novel concept from HC-EBM trained from another domain?

We also evaluated our model in a challenging setting by comparing with other existing approaches in a controlled and systematic way.
Since there are no suitable dataset, we designed a Hierarchical-Concept corpus, a dataset based on grid-world images.  
The dataset comes with testing images for the classification and detection tasks which contain more complex, hierarchical concepts, composed from simpler concepts in the training set (See Appendix~\ref{app:dataset} for details). The concept instances on our images have varying locations, size, and relative positions, making them more difficult than fixed-location fixed-size inference tasks such as in \cite{shanahan2020explicitly}. We also provide 2D and 3D versions of images for demonstrating concept acquisition across domains.

\begin{table}[!t]
\centering
\caption{Performance of models on classification accuracy for hierarchical dataset 1 and 2 (\%) and on detection for hierarchical concepts with distractors. For the latter task, we use the pixel-wise intersection-over-union (IoU) (\%) as our metrics. The bold fonts in the tables indicate the best model comparing with baselines. The ``Statistics'' in classification  predicts the class that has the most global label fraction.
}
\resizebox{1\textwidth}{!}{%
\begin{tabular}{@{}lcccc} \toprule
& \multicolumn{2}{c}{\textbf{Classification (acc.)}} & \multicolumn{2}{c}{\textbf{Detection (IoU)}} \\
\textbf{Model} &  HD-Letter  & HD-Concept & \multicolumn{1}{c}{HD-Letter+distractor} & \multicolumn{1}{c}{HD-Concept+distractor} \\ \midrule
Statistics   & 46.5   & 53.5  &  5.69  &  12.6 \\
Heuristics   &  (--)   & (--)  &  42.3  &  29.2 \\
CADA-VAE \cite{schonfeld2019generalized}  & 18.0     &   66.0  &  (--)  &  (--) \\ \midrule
\textbf{\proj (ours)} &  \textbf{84.5}  &  \textbf{70.5} &  \textbf{72.5}   & \textbf{84.7}  \\ \midrule \midrule
\proj composition without R-EBM & 62.5  &   32.5 &  45.3  &   84.3  \\
\proj composition without HC-EBM & 67.0  &   55.0 & 67.7  &   78.4   \\
\proj without $L^\text{(pos-std)}$ &  43.6  &  65.5  & 76.1  &  81.5  \\
\proj without $L^\text{(neg)}$ &  64.5 &  59.0 & 60.0  &    84.2 \\
\proj without $L^\text{(em)}$ & 81.5   &  61.0  &  68.0  &   86.0  \\
\proj with only $L^\text{(Improved)}$ & 27.5   &  55.5 & 49.1   &  81.7   \\ \bottomrule
\end{tabular}}
\label{tab:classification-detection}
\vskip -0.1in
\end{table}

\subsection{Zero-shot Classification and Detection of Novel Concepts}
\label{sec:classify_detection}

To test recognition of novel concepts, we designed two datasets consisting of different concept and relation types. The HD-Letter hierarchical dataset consists of concept instances of ``line'' and relation instances of ``parallel'', ``perp-edge'' (perpendicular and touching edge), ``perp-mid'' (perpendicular and touching middle), together with distractor objects. Examples are provided in the form of 3-tuples $(image, mask, concept)$ for concepts, and $(image, mask_1, mask_2, relation)$ for relations. \proj models are trained via objective Eq. \ref{eq:objective} (See Appendix \ref{app:dataset} for examples of training datasets).
At inference time, the models need to perform classification and detection on novel images with more complicated English characters of ``E-shape'', ``F-shape'' and ``A-shape'', given their structure graphs with up to 4 nodes and 6 edges. For detection, the images also contain a few distractors (concepts unrelated to the ones the model is predicting). The model is asked to return a mask indicating which pixels belong to an instance of the specified novel concept. Although looking simple, this is actually a very challenging task. because in order to solve the problem of  classifying/detecting hierarchical concepts, the model needs to solve a subgraph isomorphism problem, which is NP-hard. For example, take the problem of detecting ``E'' shape in an image with distractor of ``T'' and ``Rectangle'' (Fig. \ref{fig:experiment_1} (a) first subfigure), a graph with 10 nodes (lines), 45 possible edges. Note that during training time, a model has only learned ``line'' concept, and relations  ``parallel'', ``perp-edge'' (perpendicular, touching edge), and ``perp-mid'', but not the overall ``E'' shape. At inference time and given the ``E'' concept graph  with 4 nodes (lines), 6 edges (relations), the model needs to find the ``E'' concept subgraph within the large image graph. This involves $C_{10}^{4} \times 4!=5040$ possible mask assignments. Additionally, a model may not perfectly detect the masks for low-level concepts.

For more complex concepts and relations, we designed the HD-Concept hierarchical dataset, which consists of training concepts of ``E-shape'' and ``rectangle'', and training relations of ``inside'', ``enclose'' and ``non-overlap''. The hierarchical concepts to be classified and detected are three characters which we term Concept1, Concept2 and Concept3, as the ground-truth masks in Fig. \ref{fig:experiment_1} (a) indicate. The three hierarchical concepts have the same multiset of concepts, but a different relation structures.

\begin{figure*}[t]
\centering
    \includegraphics[width=1.04\textwidth]{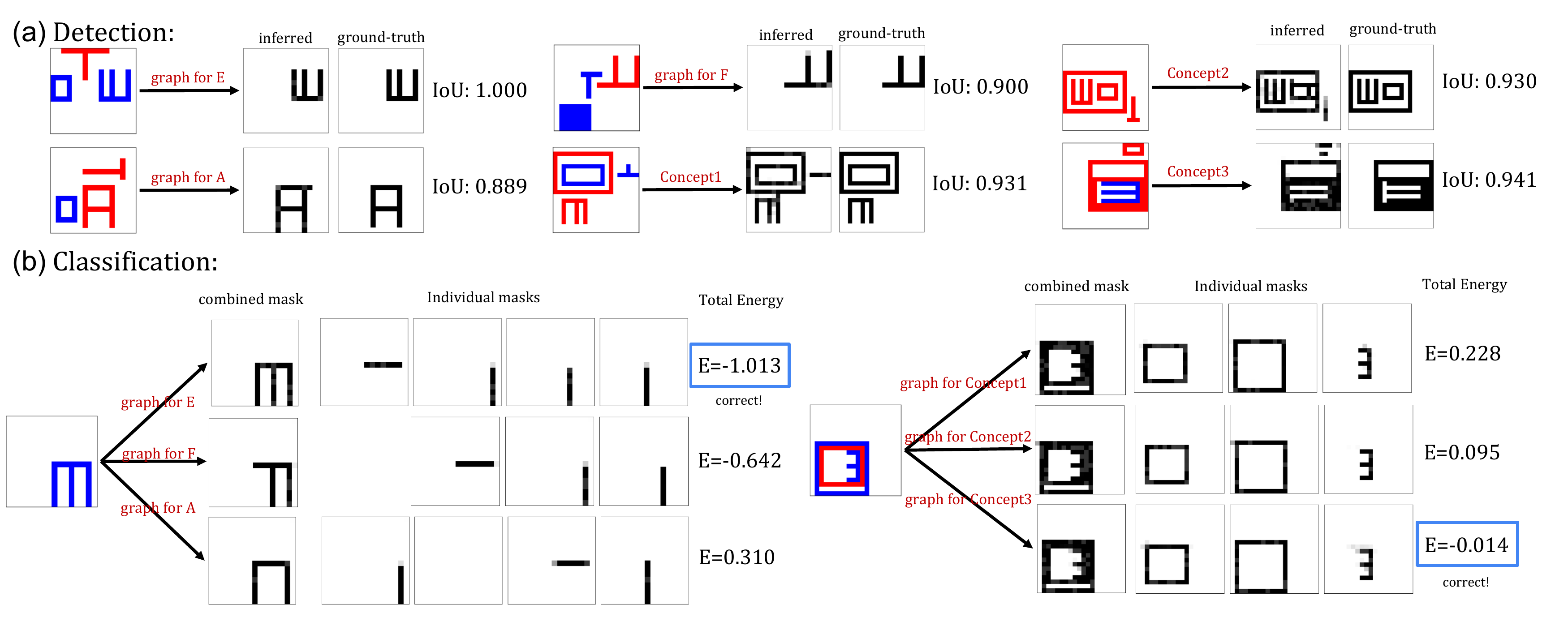}
    \vspace{-6mm}
    \caption{{\bf Sample of \proj's Inference Results.} Tasks are (a) pixel-wise detection, and (b) classification, for the datasets of HD-Letter (consisting of concept E-shape, F-shape and A-shape) and HD-Concept Dataset (consisting of Concept1, Concept2, Concept3). We see that \proj has high pixel-wise detection IoU for the specified concepts. For classification, \proj uses each candidate concept's  graph and performs SGLD on the composed HC-EBM to infer the mask and compute its energy. We see that \proj gives lower energy to the correct concept, even when sometimes candidate concepts have  similar inferred mask ((b) right).}
    \label{fig:experiment_1}
    \vspace{-5mm}
\end{figure*}

As the closest setting to our classification task is zero-shot classification,
we compare \proj to a state-of-the-art zero-shot learning algorithm CADA-VAE \cite{schonfeld2019generalized}, which we adapted to our setting, using the set of concepts and relations as feature embeddings to represent the concept graph (see Appendix \ref{app:cada_vae} for details). Additionally, we evaluated a ``Statistics'' baseline which samples random pixels on the image based on the global statistics of pixel occurrence, and a ``Heuristics'' baseline that randomly chooses a same-color connected object. We use a CNN-based architecture for \proj, as given in Appendix \ref{app:hicone_architecture}. We add comparisons with ablation of different aspects of \proj, including without HC-EBM terms or R-EBM terms in the composition rule of Def. \ref{def:composition_rule} (Eq. \ref{eq:composition}), and without different regularization terms of the objective in Eq. \ref{eq:objective}. 

Fig.~\ref{fig:experiment_1} shows a demonstration of this experiment.
From performance results in Table~\ref{tab:classification-detection}, we see that \proj achieves classification accuracy of 84.5\% and 70.5\% on HD-Letter and HD-Concept, respectively, both higher than CADA-VAE. The gap is more significant in HD-Letter, where there is a larger distribution shift for hierarchical concepts (as can be seen in  Appendix~\ref{app:dataset}), which the joint embedding of the image and features learned by CADA-VAE is insufficient to handle.
The reason for the low accuracy is that CADA-VAE is not able to address these out-of-domain distribution shifts (See Appendix \ref{app:cada_vae_explanation} for details). During inference, its embedding (multi-hot vector) for the graph structure can contain up to 10 hots (4 lines, 6 edges), while during training, it is only up to 1-hot. This example also shows the intrinsic difficulty of the task.
In addition, Table~\ref{tab:classification-detection} and Fig.~\ref{fig:experiment_1} (a) show that \proj is able to detect the hierarchical concepts in the presence of distractors, and that its performance is significantly better than ``Statistics'' and ``Heuristics'' baselines.

The results of our ablation studies in Table \ref{tab:classification-detection} show that both the HC-EBM and R-EBM terms in Eq.~\ref{eq:composition} are key to successful classification and detection. This is likely because without relations and using only concept terms, \proj may lose its ability to distinguish between different hierarchical concepts; for example, both ``E-shape'' and ``A-shape'' have 4 lines but different relation structures. Moreover, we see that both $L^\text{(pos-std)}$ and $L^\text{(neg)}$ improve classification and detection performance.

\begin{table}[t!]
\centering
\caption{Performance of models on acquiring  concepts between models and domains at inference time (\%).}
\resizebox{\textwidth}{!}{%
\begin{tabular}{@{}lcccc} \toprule
& \multicolumn{2}{c}{\textbf{Domain 1 (2D image) Parsing}} & \multicolumn{2}{c}{\textbf{Domain 2 (3D image)}} \\
\textbf{Model} & Isomorphism (acc.) $\uparrow$  & Edit distance $\downarrow$  & Classification (acc.) $\uparrow$ & Detection (IoU) $\uparrow$ \\ \midrule
Statistics   & 2.33   & 3.14  & 33.3  &  2.53 \\
Mask R-CNN \cite{he2017mask}+relation classification   &  35.5 & 1.01 &  (--)   &  (--) \\ \midrule
\textbf{ZeroC$_1\to$\ ZeroC$_2$ (ours)} &  \textbf{72.7}  & \textbf{0.50} &   \textbf{60.7}   & \textbf{94.4} \\ \midrule \midrule
ZeroC$_1$ without $L^\text{(pos-std)}\to$\ ZeroC$_2$  & 55.2  & 1.57  &   54.7 &   90.5 \\
ZeroC$_1$ without $L^\text{(neg)}\to$\ ZeroC$_2$  & 53.5 & 0.99 &  52.5  &  92.1  \\ 
ZeroC$_1$ without $L^\text{(em)}\to$\ ZeroC$_2$  & 50.7 & 1.58 &  51.3  &  95.4  \\
ZeroC$_1$ with only $L^\text{(Improved)}\to$\ ZeroC$_2$  &  11.5 &  2.00 &  50.5  &   94.7  \\
ZeroC$_2$ with ground-truth graph (upper bound)  & (--) & (--) &  61.8  & 94.2 \\ \bottomrule
\end{tabular}}
\label{tab:transfer}
\vspace{-4mm}
\end{table}

\begin{figure}[t]
\centering
    \includegraphics[scale=0.65]{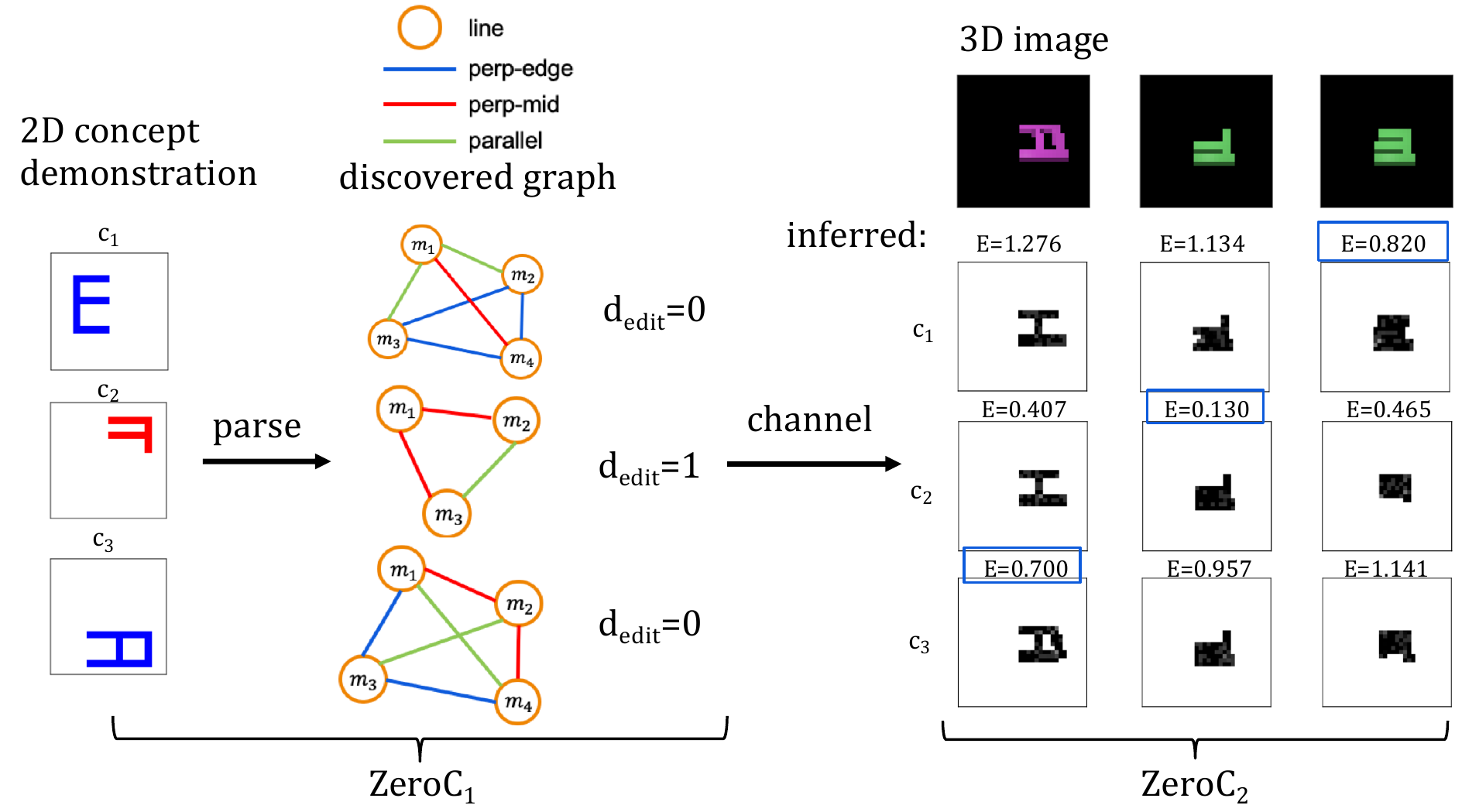}
    \caption{{\bf Acquiring Hierarchical Concept between Domains.} The figure shows actual example tasks and results. At inference, ZeroC$_1$ sees the 2D demonstration of three images showing three unseen concepts $c_1$, $c_2$, $c_3$. It first parses each image into respective concept graphs. We see that except for $c_2$ that has an edit distance $d_\text{edit}$ of 1, the others have perfect parsing. ZeroC$_1$ then sends to ZeroC$_2$ the parsed concept graphs, which ZeroC$_2$ uses to perform classification that selects which 3D image corresponds to each concept.}
    \label{fig:transfer-exp}
    \vspace{-7mm}
\end{figure}

\subsection{Acquiring Novel Hierarchical Concepts Across Domains}
\label{sec:acquiring_exp}

We show that \proj can acquire novel hierarchical concepts across models and domains (Fig. \ref{fig:transfer-exp}). We train a \proj model in domain ZeroC$_1$, where the images are 2D and one-hot color-coded, containing the the same training concepts and relation types as in HD-Letter in Sec. \ref{sec:classify_detection}. \emph{Independently}, we train another model in a different domain ZeroC$_2$, where the images contain the same set of concepts and relations, but are viewed in 3D from a certain camera angle, have larger size, and use RGB colors. At inference time, each test task consists of a tuple of three images showing the hierarchical concepts in the first domain to ZeroC$_1$. ZeroC$_1$ is only allowed to send symbolic information, up to a few bits, to ZeroC$_2$. Then ZeroC$_2$ performs classification and detection tasks on three example images in the second domain. In addition, we also evaluate whether ZeroC$_1$ can parse the concept graph of the hierarchical concept correctly, using the metric of graph edit distance $d_\text{edit}$ and graph-isomorphism accuracy w.r.t. the ground-truth. Note that the  graph-isomorphism accuracy is a stringent metric, which is only 1 if the parsed graph is isomorphic to the ground-truth, and 0 otherwise\footnote{This is a stringent metric since, e.g. for ``Eshape'' that contains 4 concept nodes and 6 relation edges, an individual accuracy of 0.9 would result in $\sim 0.9^{4+6}=0.349$ isomorphism accuracy, and an individual accuracy of 0.8 would result in $\sim 0.8^{4+6}=0.107$ isomorphism accuracy.}. 
We compare with a strong baseline of Mask R-CNN \cite{he2017mask} for object detection together with relation classification. The relation classification network is a CNN which takes as input a pair of embeddings produced by the Mask R-CNN, and predicts the label of the relation (see Appendix \ref{app:mask_rcnn} for details). 
We did extensive tuning of the Mask R-CNN approach to obtain its best performance.
We also evaluate the performance of ZeroC$_2$ given the ground-truth structure graph, as an ideal scenario of perfect parsing, providing an upper bound for the performance.

Table \ref{tab:transfer} shows that ZeroC$_1$ achieves 72.7\% graph isomorphism accuracy for parsing the hierarchical concepts in the first domain, significantly higher than Mask R-CNN + relation classification. After sending the concept graphs in the second domain, ZeroC$_2$ achieves a classification accuracy of 60.7\% and detection IoU of 94.4\%. Without this information and relying on global statistics, the classification accuracy of ZeroC$_2$ is 33.3\%. This demonstrates that ZeroC$_2$ is able to acquire novel hierarchical concepts at \emph{inference} time, from an \emph{independently} trained model from a \emph{different domain}. The ablation study shows that $L^\text{(em)}$, $L^\text{(neg)}$ and $L^\text{(pos)}$ all contribute to more accurate parsing. The standard technique of EBM training \cite{du2020improved}
are insufficient to achieve good parsing, leading to less accurate transfer of hierarchical concepts. We also see in the ablations that even without perfect parsing from ZeroC$_1$, the reduction of classification accuracy for ZeroC$_2$ is small, showing that it is able to classify under noisy specification of the concept graph. We discuss further on the generality (Appendix \ref{app:generality}), scalability (Appendix \ref{app:scalability}) and computational complexity (Appendix \ref{app:complexity}) in respective Appendix sections.

\vspace{-1mm}
\section{Related Work}
\vspace{-1mm}

Our work relates to visual compositionality, concept and relation learning, and zero-shot learning.

\xhdrx{Visual compositionality}
Compositionality is a key for addressing diverse tasks given finite basic knowledge.
Some approaches use composition EBMs for generation~\cite{du2020compositional}, and a VAE-based architecture for bi-directional symbol-image generation that can also learn logical recombination of concepts~\cite{higgins2017scan}. These two works focus on composing \emph{factors of variation}, \textit{e.g.} color, position, smiling, young. In contrast, our work focuses on concepts that abstract objects, where the internal hierarchical relational structure is key. Moreover, while the above works focus on generation, our work focuses on the tasks of classification and detection. A novel Bayesian-based method for few-shot learning on the Omniglot dataset shows that compositionality is pivotal for improved performance~\cite{lake2015human}. They achieve compositionality via hierarchical MCMC sampling on hand-coded priors of elementary concepts and relations. In comparison, our work is neural network-based and only requires demonstrations of elementary concepts and relations, reducing hand-coded priors. Another approach introduced a modular neural network, which uses composition of neural modules for visual question answering~\cite{andreas2016neural}. Their method can be seen as composing transformations on a representation. In comparison, our composition is achieved via composition of energy landscapes. While they focus on question answering on scene graphs, we focus on classification and detection of hierarchical concepts.

\xhdrx{Concept and relation learning}
There has been exciting progress in concept learning and relation learning. Works in concept learning generally represent concepts in latent space via prototypes \cite{snell2017prototypical,cao2020concept}, or via latent embedding such as SCAN \cite{higgins2017scan}, InfoGAN \cite{chen2016infogan} and Neuro-Symbolic Concept Learner \cite{mao2019neuro}. \cite{mordatch2018concept} introduced EBMs to represent concepts with a demonstration in simple state space. \cite{du2021unsupervised} further introduced unsupervised learning of local and global concepts with EBMs. Regarding relation learning, non-local neural networks \cite{wang2018non}, Relation Networks  \cite{santoro2017simple}, Neural Relational Inference \cite{kipf2018neural} and C-SWM \cite{Kipf2020Contrastive} use latent complete graphs as inductive biases to 
represent potential relations. PrediNet \cite{shanahan2020explicitly} explicitly represents propositions, relations, and objects with a transformer-based architecture, and demonstrates that it can learn relations that generalize to novel shapes of objects and column patterns. In comparison to the above works, \proj explicitly learns both concepts and relations, which has the unique capability to recognize and acquire hierarchical concepts at inference.

\xhdrx{Zero-shot learning}
Zero-shot learning methods \cite{romera2015embarrassingly,jiang2017learning,bucher2017generating,zhao2018zero,schonfeld2019generalized} typically learn a joint embedding space between image and feature labels, and perform classification at inference time on images with novel classes, based on how the novel classes correspond to a set of features. Our generalization task also requires generalizing to new concepts without seeing the image (zero-shot), but has the important distinctions that at inference time, our concepts to be inferred lie at a \emph{higher} hierarchy than that in training, and furthermore use the structural concept graph. In comparison, in standard zero-shot learning, concepts in training and inference lie at the \emph{same} hierarchy level, and only generalize to new combinations of features (constituent concepts) while neglecting relation structures.

\vspace{-2mm}
\section{Conclusion}
\vspace{-1mm}
\label{sec:discussion}

In this paper, we introduce \proj, a new framework for zero-shot concept recognition and acquisition at inference time. Our experiments show that in a challenging grid-world domain, \proj is able to recognize complex, hierarchical concepts composed of English characters in a grid-world in a zero-shot manner, being given a high-level, symbolic specification of their structures, and after being trained with simpler concepts. In addition, we demonstrate that an independently trained \proj is able to transfer hierarchical concepts across different domains at inference. Although this work is evaluated only in grid-world domain, we are the first to address this difficult challenge, and hope that this work will make a useful step in the development of composable neural systems, capable of zero-shot concept recognition and acquisition and hence suitable for more diverse tasks.

\ack{}
We thank Rui Yan and Blaž Škrlj for discussions and for providing feedback on our manuscript.
We also gratefully acknowledge the support of
DARPA under Nos. HR00112190039 (TAMI), N660011924033 (MCS);
ARO under Nos. W911NF-16-1-0342 (MURI), W911NF-16-1-0171 (DURIP);
NSF under Nos. OAC-1835598 (CINES), OAC-1934578 (HDR), CCF-1918940 (Expeditions), 
NIH under No. 3U54HG010426-04S1 (HuBMAP),
Stanford Data Science Initiative, 
Wu Tsai Neurosciences Institute,
Amazon, Docomo, GSK, Hitachi, Intel, JPMorgan Chase, Juniper Networks, KDDI, NEC, and Toshiba.

The content is solely the responsibility of the authors and does not necessarily represent the official views of the funding entities.

\bibliography{reference}
\bibliographystyle{IEEEtran}

\section*{Checklist}

\begin{enumerate}

\item For all authors...
\begin{enumerate}
  \item Do the main claims made in the abstract and introduction accurately reflect the paper's contributions and scope?
    \answerYes{}
  \item Did you describe the limitations of your work?
    \answerYes{In conclusion and Appendix \ref{app:limitations}.}
  \item Did you discuss any potential negative societal impacts of your work?
    \answerYes{In Appendix \ref{app:social_impact}.}
  \item Have you read the ethics review guidelines and ensured that your paper conforms to them?
    \answerYes{}
\end{enumerate}

\item If you are including theoretical results...
\begin{enumerate}
  \item Did you state the full set of assumptions of all theoretical results?
    \answerYes{}
        \item Did you include complete proofs of all theoretical results?
    \answerYes{See Appendix \ref{app:empty_mask_proof}.}
\end{enumerate}

\item If you ran experiments...
\begin{enumerate}
  \item Did you include the code, data, and instructions needed to reproduce the main experimental results (either in the supplemental material or as a URL)?
     \answerYes{The Appendix includes full details on model architecture, training and evaluation to reproduce the experimental results. Code and data will be released upon publication of the paper.}
  \item Did you specify all the training details (\textit{e.g.}, data splits, hyperparameters, how they were chosen)?
   \answerYes{Full details to reproduce the experiments are included in the Appendix.}
        \item Did you report error bars (\textit{e.g.}, with respect to the random seed after running experiments multiple times)?
    \answerNo{}
        \item Did you include the total amount of compute and the type of resources used (\textit{e.g.}, type of GPUs, internal cluster, or cloud provider)?
    \answerYes{See Appendix of respective experiment section.}
\end{enumerate}

\item If you are using existing assets (\textit{e.g.}, code, data, models) or curating/releasing new assets...
\begin{enumerate}
  \item If your work uses existing assets, did you cite the creators?
    \answerNA{We use our own created benchmark.}
  \item Did you mention the license of the assets?
    \answerNA{}
  \item Did you include any new assets either in the supplemental material or as a URL?
    \answerYes{In Appendix \ref{app:dataset}, we describe the engine and training/testing tasks. Full dataset will be released upon publication.}
  \item Did you discuss whether and how consent was obtained from people whose data you're using/curating?
    \answerNA{}
  \item Did you discuss whether the data you are using/curating contains personally identifiable information or offensive content?
    \answerNA{}
\end{enumerate}

\item If you used crowdsourcing or conducted research with human subjects...
\begin{enumerate}
  \item Did you include the full text of instructions given to participants and screenshots, if applicable?
    \answerNA{}
  \item Did you describe any potential participant risks, with links to Institutional Review Board (IRB) approvals, if applicable?
    \answerNA{}
  \item Did you include the estimated hourly wage paid to participants and the total amount spent on participant compensation?
    \answerNA{}
\end{enumerate}

\end{enumerate}

\appendix

\newpage
\section{Appendix}

In the Appendix, we provide details that complement the main text. Specifically, in Appendix \ref{app:learning} and \ref{app:regularizations}, we detail the learning algorithm of HC-EBMs and the learning objective these models optimize. In Appendix \ref{app:empty_mask_proof}, we prove a necessary conditions for \proj to correctly discover the underlying concept graph. In Appendix \ref{app:hicone_architecture}, we provide the exact neural architecture used to parameterize HC-EBMs. In Appendix \ref{app:acquire}, we present further explanation on how hierarchical concepts acquired in one domain may be transferred to different domain, as well as a note on time complexity of inference. In Appendix \ref{app:cada_vae} and \ref{app:cada_vae_explanation}, we present implementation details on the CADA-VAE baseline for classifying hierarchical concepts and an analysis of its limited performance. In Appendix \ref{app:mask_rcnn}, we similarly present implementation details on the Mask R-CNN + Relation Classification baseline for acquiring concepts. In Appendix \ref{app:dataset}, we detail the generation process of our 2D and 3D grid-world datasets. In Appendix \ref{app:limitations}, we explain some limitations of our current work and propose several future directions to explore. In Appendix \ref{app:social_impact}, we discuss our work's broader social impact. In Appendix \ref{app:additional_exp}, we present additional experimental results on a CLEVR dataset to demonstrate the potential of our framework to generalize to more real-world settings. In Appendix \ref{app:generality}, we further discuss the generality of our framework to different sets of elementary concepts and relations and to other datasets. In Appendix \ref{app:scalability}, we explain the scalability of the framework with respect to task complexity, inference time complexity, and image complexity. In Appendix \ref{app:complexity}, we summarize ZeroC's computational complexity and our empirical observations.

\subsection{Learning algorithm}
\label{app:learning}

Here we give the learning algorithm for \cebms, which can be elementary or hierarchical. A similar algorithm applies to \rebms, by simply replacing the $m$ by $m_1, m_2$. Here we omit the subscript of $X,M,C$ for clarity. 

In our experiments, we perform hyperparameter search over the coefficients, to optimize the average classification accuracy and detection IoU on the validation set that has the same type of concepts/relations (not validating on the inference tasks in Section \ref{sec:experiments}). For training HC-EBMs, we use $\alpha_\text{pos-std}=0.1, \alpha_\text{neg}=0.05$ and $\alpha_\text{em}=0.1$. For training R-EBMs, we use $\alpha_\text{pos-std}=1, \alpha_\text{neg}=0.2$ and $\alpha_\text{em}=0$. We use learning rate $10^{-4}$, number of SGLD steps $K=60$ for training and $K=150$ for inference. During inference, we use an ensemble size of 64 to perform MAP estimation as in Eq. (\ref{eq:classify}), and use ensemble size of 256 for detection.

\begin{algorithm}[h]
   \caption{\textbf{Algorithm for learning \cebms}}
\label{alg:learning}
\begin{algorithmic}
   \STATE {\bfseries Require:} data dist. $p_D(\x,m,c)$, \cebm $E_\theta$, 
   \STATE {\bfseries Require:} step size $\lambda$, number of steps $K$, random mask generator $\mathcal{U}_a$, random embedding generator $\mathcal{U}_c$, coefficients $\alpha_\text{pos-std}$, $\alpha_\text{neg}$, and $\alpha_\text{em}$.
\STATE 1: \ \ $\mathcal{B}\gets \emptyset{}$
\STATE 2:\ \ \  \textbf{while} not converged \textbf{do}
\STATE 3:\ \ \ \ \ \ \ \ \  $(\x_n^+, m_n^+, c_n^+)\sim p_D$
\STATE \ \ \ \ \ \ \ \ \  \ \ \  \textit{// Generate samples from Langevin dynamics:}
\STATE 4:\ \ \ \ \ \ \ \ \  $\text{rand}\sim U[0,1]$
\STATE 5:\ \ \ \ \ \ \ \ \  \textbf{if} $\text{rand}\in[0,1/4)$ \textbf{do}
\STATE 6:\ \ \ \ \ \ \ \ \ \ \ \ \ \ \ \ \ $(\x_n, \tm_n^0, c_n^+)\sim \mathcal{B}$ with 20\% probability and $\tm_n^0\sim\mathcal{U}_a$ otherwise
\STATE 7: \ \ \ \ \ \ \ \ \ \ \ \ \ \ \ \  $\tm_n^K\gets\textbf{SGLD}_{\tm}(E_\theta; \x_n, \tm_n^0, c_n^+)$
\STATE 8:\ \ \ \ \ \ \ \ \ \ \ \ \ \ \ \ \  $(\x_n^-,m_n^-,c_n^-)\gets(\x_n,\tm_n^K, c_n^+)$
\STATE 9:\ \ \ \ \ \ \ \ \  \textbf{elseif} $\text{rand}\in[1/4,1/2)$ \textbf{do}
\STATE 10:\ \ \ \ \ \ \ \ \ \ \ \ \ \ \ $(\x_n, m_n^+, \tc_n^0)\sim \mathcal{B}$ with 20\% probability and $\tc_n^0\sim\mathcal{U}_c$ otherwise
\STATE 11: \ \ \ \ \ \ \ \ \ \ \ \ \ \  $\tc_n^K\gets\textbf{SGLD}_{\tc}(E_\theta; \x_n, m_n, \tc_n^0)$
\STATE 12:\ \ \ \ \ \ \ \ \ \ \ \ \ \ \  $(\x_n^-,m_n^-,c_n^-)\gets(\x_n,m_n^+, \tc_n^K)$
\STATE 13:\ \ \ \ \ \ \  \textbf{elseif} $\text{rand}\in[1/2,3/4)$ \textbf{do}
\STATE 14:\ \ \ \ \ \ \ \ \ \ \ \ \ \ \ $(\x_n, \tm_n^0, \tc_n^0)\sim \mathcal{B}$ with 20\% probability and $\tm_n^0\sim\mathcal{U}_m,\tc_n^0\sim\mathcal{U}_c$ otherwise
\STATE 15: \ \ \ \ \ \ \ \ \ \ \ \ \ \  $\tm_n^K, \tc_n^K\gets\textbf{SGLD}_{\tm,\tc}(E_\theta; \x_n, \tm_n^0, \tc_n^0)$
\STATE 16:\ \ \ \ \ \ \ \ \ \ \ \ \ \ \  $(\x_n^-,m_n^-,c_n^-)\gets(\x_n,\tm_n^K, \tc_n^K)$
\STATE 17:\ \ \ \ \ \ \  \textbf{else do}
\STATE 18:\ \ \ \ \ \ \ \ \ \ \ \ \ \ \ $(\tx_n, \tm_n^0, \tc_n^0)\sim \mathcal{B}$ with 20\% probability and $\tx_n^0\sim\mathcal{U}_x$ otherwise
\STATE 19: \ \ \ \ \ \ \ \ \ \ \ \ \ \  $\tx_n^K\gets\textbf{SGLD}_{\tx}(E_\theta; \tx_n^0, m_n, c_n)$
\STATE 20:\ \ \ \ \ \ \ \ \ \ \ \ \ \ \  $(\tx_n^-,m_n^-,c_n^-)\gets(\tx_n^K,m_n, c_n)$
\STATE 21:\ \ \ \ \ \ \  \textbf{end if}

\STATE \ \ \ \ \ \ \ \ \  \ \ \  \textit{// Optimize objective for $E_\theta$ wrt $\theta$ with Eq. \ref{eq:objective}:}
\STATE 22: \ \ \ \ \ \ \  $\Delta \theta \gets \nabla_\theta  \frac{1}{N}\sum_n\left(L^\text{(Improved)}_n +\alpha_\text{pos-std}L_n^\text{(pos-std)}+\alpha_\text{em}L^\text{(em)}_n+\alpha_\text{neg}L^\text{(neg)}_n\right)$
\STATE 23: \ \ \ \ \ \ \ Update $\theta$ based on $\Delta \theta $ using Adam optimizer
\STATE 24:  \ \ \ \ \ \ \ $\mathcal{B}\gets\mathcal{B}\cup (\x_n^-,m_n^-,c_n^-)$
\STATE 25: \textbf{end while}
\STATE 26: \textbf{return} $E_\theta$ 
\end{algorithmic}
\end{algorithm}

\begin{algorithm}[t]
   \caption{\textbf{Stochastic Gradient Langevin Dynamics (SGLD)}}
\label{alg:SGLD}
\begin{algorithmic}
   \STATE {\bfseries Require:} energy-based model $E_\theta$ with concept (relation) embedding $c$ ($r$)
\STATE {\bfseries Require:} SGLD target $\tq$, choose from $\tm$, $\tc$,  $(\tm,\tc)$  or $x$
\STATE {\bfseries Require:} Input $\x$
\STATE {\bfseries Require:} step size $\lambda$, number of steps $K$
\STATE 1: \textbf{for} $k=1$ to $K$ \textbf{do}
\STATE 2: \ \ \ \ \ \  $\tq^k\gets \tq^{k-1} - \frac{\lambda}{2}\nabla_q E_\theta(\x;\tq^{k-1})+\epsilon^{k-1}$,  where $\epsilon^{k}\sim N(0,\sigma^2)$, $\sigma^2=\lambda$
\STATE 3: \textbf{end for}
\STATE 4: \textbf{return} $\tilde{q}^K$ 
\end{algorithmic}
\end{algorithm}

\begin{figure}[h!]
\centering
     \includegraphics[width=0.85\textwidth]{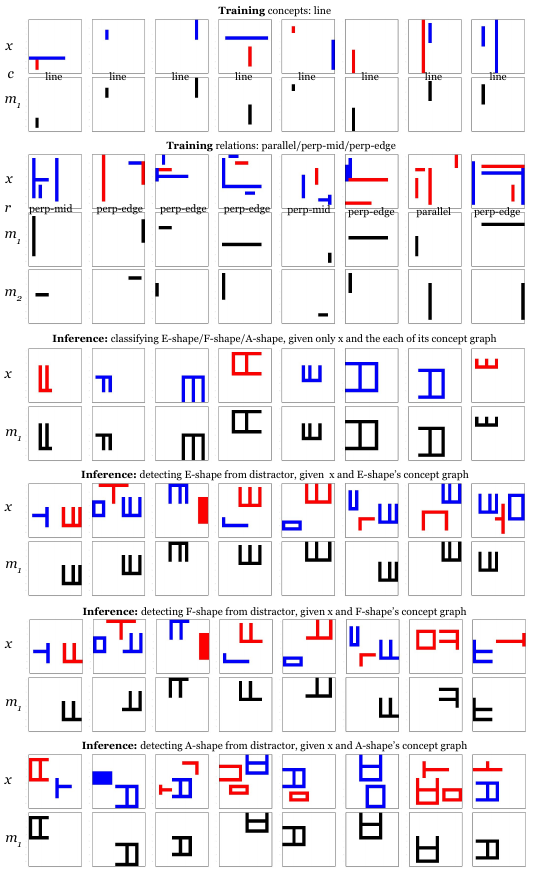}
      \caption{Samples examples from the HD-Letter dataset for training and inference of zero-shot concept classification and detection. More detail in Appendix \ref{app:dataset}. The models are trained with concept of ``line'' and relations of ``parallel'', ``perp-mid'', ``perp-edge''. At inference, the models are tasked to perform classification and pixel-wise detection, on hierarchical concepts (w.r.t. trainnig) of ``E-shape'', ``F-shape'' and ``A-shape''. We see that the concepts in inference is more complex than those in training. \textit{E.g.} for detecting ``Eshape'' in inference, a model will need to compose up to 4 nodes (of ``line'' concepts) and 6 edges of ``parallel'', ``perp-mid'', ``perp-edge'' relations.}
       \label{fig:hc-letter-examples}
\end{figure}

\subsection{Learning objective}
\label{app:regularizations}

Here we detail the learning objective for \proj and provide explanations, complementing Sec. \ref{sec:learning}. The $L_n^\text{improved}$ is the objective proposed in \cite{du2020improved}, neglecting the entropy regularization term. In addition, as explained in Sec. \ref{sec:learning}, we introduce three regularizations to inject the right inductive biases to address the tasks of zero-shot concept recognition and acquisition:

\begin{align}
\label{eq:objective_appendix}
\left\{
\begin{array}{ll}
L_n^\text{(pos-std)}=\left(\text{Var}_n(E_\theta(x_n^+,m_n^+,c_n^+))\right)^\frac{1}{2}\\
L^\text{(em)}_n=\left|\frac{E_\theta(\x_n^+,m_n^+,c_n^+)+E_\theta(\x_n^+,m_n^-,c_n^+)}{2}-E_\theta(\x_n^+,m_n^\text{(em)},c_n^+)\right|\\
L^\text{(neg)}_n=-E_\theta(x_n^+,m_n^\text{(neg)},c_n^+)
\end{array}
\right.
\end{align}

The superscript ``$+$'' denotes positive examples from ground-truth, ``$-$'' denotes negative examples generated from SGLD, ``(em)'' denotes an empty mask, and ``(neg)'' denotes algorithmically generated negative examples. 
The additional regularizations are $L_n^\text{(pos-std)}$ for reducing energy variance for positive examples, $L_n^\text{(em)}$ for empty-mask regularization, and $L_n^\text{(neg)}$ which is algorithmically generated negative examples. Note that unlike standard image-only EBMs, the training needs to take multiple modalities into account: image $x$, mask $m$ and concept $c$. We perform conditional SGLD w.r.t. each modality to generate negative examples. \textit{E.g.}, when generating negative examples for masks $m_n^-$, we provide ground-truth of image and concept label $x_n^-\gets x_n^+$, $c_n^-\gets c_n^+$. Thus, $(x_n^-,m_n^-,c_n^-)=(x_n^+,m_n^-,c_n^+)$ is a negative tuple even though  $x_n^-$ and $c_n^-$ are from ground-truth. For each minibatch we randomly sample the modality the conditional SGLD is performed on (steps 4-21 in Alg. \ref{alg:learning}).

\xhdrx{Reducing variance of energy for positive examples}
The use of only contrastive divergence in typical EBM training, \textit{i.e.} pushing down energy for positive examples and pushing up energy for negative examples, is insufficient, since the composed \cebm needs to discover the masks of \emph{all} its constituent models. 
For example, with 2 objects in the image $x^+$ with concepts $c_1^+$ and $c_2^+$, respectively, we can use $E(x^+,m_1,c^+_1)+E(x^+,m_2,c^+_2)$ according to our composition rule (Def. \ref{def:composition_rule}) to detect their respective masks. However, $L^\text{(improved)}$ only encourages $(x^+,m^+_1,c^+_1)$ to be lower than $(x^+,m^-_1,c^+_1)$ locally, but it can still be higher than $(x^+,m^-_2,c^+_2)$ for a negative mask $m_2^-$ for concept $c_2^+$. Then the composed energy model will favor discovering $m_2^-$ instead of $m_1^+$. Thus, we add $L^\text{(pos-std)}$ to encourage similar energy for positive examples, thus lower than the energy for negative examples.

\xhdrx{Empty-mask regularization} To properly perform parsing (Alg. \ref{alg:parsing}) when there are more energy terms than actual instances, we require that the redundant masks are empty instead of being some random negative masks. in Appendix \ref{app:empty_mask_proof}, we prove that the necessary condition for Alg. \ref{alg:parsing} to correctly discover the underlying concept graph is that the energy of the empty mask lies between the energy of positive examples and negative examples. Intuitively,  given the image $x_n^+$ and concept $c_n^+$, the energy $E(x_n^+,m_n^\text{(em)},c_n^+)$ with empty mask $m_n^\text{(em)}$ should be between the positive energy $E(x_n^+,m_n^+,c_n^+)$ and negative energy $E(x_n^+,m_n^-,c_n^+)$ for the empty mask to appear before random negative masks appear. Therefore, in $L_n^\text{(em)}$, we encourage $E(x_n^+,m_n^\text{(em)},c_n^+)$ to be near the average of positive and negative energy. If there is redundant energy terms during parsing, the corresponding masks will favorably become all-zero instead of some random negative mask.

\xhdrx{Algorithmically generated negative examples} To encourage each mask to discover a single concept instance instead of a partial instance, we randomly corrupt the tuple $(x_n^+, m_n^+, c_n^+)$ and push up the corresponding energy.

\xhdrx{Reason to neglect the entropy regularization} The entropy term in \cite{du2020improved} serves to increase the diversity of the generated examples. And the computation of entropy requires many examples. This is fine in \cite{du2020improved} since the EBM there has the form of E(x) which only needs to generate images \emph{unconditionally}, and the entropy can be estimated using all previous generated images x. In our work, our EBM are $E(x,m,c)$ and $E(x,m_1,m_2,c)$, and we need to generate the mask \emph{conditionally}, e.g. generate mask m conditioned on the image $x$ and label $c$. The entropy term would need to be a conditional entropy of $m$ given $x$ and $c$, where the pool of mask $m$ should be different for each individual image $x$ and label $c$. This requires, e.g. for each $x$, $c$, we generate over 100 masks to estimate the entropy which is computationally expensive, while currently we only need to sample 1 mask. Moreover, typically there are limited correct masks for a concept in an image, and encouraging diversity may not help the model identify the correct mask. In fact, we have tried empirically with keeping the entropy term and it results in a much worse accuracy, likely due to the above reason.

\subsection{Proof for necessary condition for correctly parsing the graph}
\label{app:empty_mask_proof}

Here we provide the proof that the energy for an empty mask needs to lie between the positive energy and negative energy, justifying the introduction of $L_n^\text{(em)}$ in Eq. (\ref{eq:objective}). Specifically, we prove:

\begin{theorem}
Let $E^+=E_\theta(x_n^+,m_n^+,c_n^+)$ be the ``positive energy'' for all positive examples of $(x_n^+,m_n^+,c_n^+)$, and $E^-=E_\theta(x_n^-,m_n^-,c_n^-)$ be the ``negative energy'' for any negative examples\footnote{Here we make the simplifying assumption that all positive energies have the same value, and all negative energies have the same value. In fact, the $L_n^\text{(pos-std)}$ encourages that the positive energies to be similar, to be able to discover all relevant concepts.} of $(x_n^+,m_n^+,c_n^+)$. Let $E^\text{(em)}$ be the energy for an $(x_n^+,m^{(0)},c_n^+)$ where $m^{(0)}=\mathbf{0}$ is an empty mask. A necessary condition for correctly discovering the underlying concept graph with Alg. \ref{alg:parsing} is that
\begin{equation}
E^+<E^\text{(em)}<\text{min}(E^-, E^+ + E^\text{(overlap)})
\end{equation}

Here $E^\text{(overlap)}$ is the energy added in Alg. \ref{alg:parsing} to penalize the concept EBMs to discover overlapping concepts.
\end{theorem}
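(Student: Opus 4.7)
The plan is to treat Alg.~\ref{alg:parsing} as selecting, for every slot $(i,n)$ in $\M_C$, one of three types of masks: (i) a true positive mask matching a real instance of concept $c_i$ in $x$, yielding per-slot energy $E^+$; (ii) a spurious negative mask, yielding $E^-$; or (iii) the empty mask $m^{(0)}$, yielding $E^{\text{(em)}}$. ``Correctly discovering the graph'' then means that at the energy minimum of $E^{(p)}$, each concept instance actually present in $x$ is covered exactly once by a positive mask, and every surplus slot is filled with the empty mask. I will show each of the three inequalities is necessary by exhibiting a competing configuration whose total $E^{(p)}$ would be lower than that of the correct configuration if the inequality were violated, making SGLD on $E^{(p)}$ converge to the wrong parse.

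For $E^+ < E^{\text{(em)}}$: take a slot $(i,n)$ which in the correct configuration is assigned a positive mask for an actually-present instance of $c_i$. Its contribution to $E^{(p)}$ in the correct configuration is $E^+$, and swapping it for the empty mask would change only this slot's energy (no overlap term is affected since the empty mask cannot overlap anything) to $E^{\text{(em)}}$. If $E^{\text{(em)}} \le E^+$, the empty-mask configuration has no larger total energy, so SGLD is not forced to place a positive mask on the true instance, violating correct parsing.

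For $E^{\text{(em)}} < E^-$: take a slot $(i,n)$ that in the correct configuration is a surplus slot (more slots than actual instances of $c_i$), hence assigned the empty mask with contribution $E^{\text{(em)}}$. Swapping it for any spurious negative mask changes its contribution to $E^-$ and again leaves overlap unchanged (empty $\to$ non-overlapping negative), so unless $E^{\text{(em)}} < E^-$ the parser has no incentive to leave the slot empty and may insert a spurious concept instance.

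For $E^{\text{(em)}} < E^+ + E^{\text{(overlap)}}$: consider two slots and a single true instance. The correct configuration puts one positive mask and one empty mask, with total cost $E^+ + E^{\text{(em)}}$. The competing ``double-cover'' configuration puts positive masks in both slots, incurring $2E^+ + E^{\text{(overlap)}}$ because the two positive masks coincide. Correctness requires $E^+ + E^{\text{(em)}} < 2E^+ + E^{\text{(overlap)}}$, i.e.\ $E^{\text{(em)}} < E^+ + E^{\text{(overlap)}}$; otherwise the parser doubles up on one instance and leaves another concept undiscovered.

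The main obstacle I expect is purely notational: pinning down ``correct discovery'' precisely enough that one can speak of a canonical assignment of slots to instances (so that swapping the contents of a single slot is well-defined), and handling the fact that $E^+$ and $E^-$ are only equal across examples under the simplifying assumption already flagged in the footnote. Once that bookkeeping is in place, the three cases above are each immediate one-line comparisons of $E^{(p)}$ on two configurations that differ in exactly one slot.
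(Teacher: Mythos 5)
Your proposal is correct and matches the paper's argument: the paper writes one general comparison between the correct configuration ($m$ positive masks, $n-m$ empty) and a competing one with $k$ enabled masks, then extracts the three inequalities by taking $k=m$, $k>m$, and $k<m$, which are exactly your empty-for-negative swap, double-cover, and positive-for-empty swap cases. The only difference is presentational — single-slot perturbations versus a $k$-parametrized family — so this is essentially the same proof.
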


\begin{proof}
Suppose that in the image there are in total $m\ge1$ objects, and there are in total $n\ge1$ concept EBMs. We want that only $m$ EBMs have their masks enabled and all the rest $n-m$ masks are empty (if $n\ge m$), \textit{i.e.} this configuration should have the lowest total energy. In other words suppose that instead there are $k$ masks that finds the objects (if will not overlap until $k>m$, and $n-k$ remains empty or have negative masks, then it should have a higher energy:

\begin{align}
&m \cdot E^+ + \lfloor n-m \rfloor \cdot E^\text{(em)}  \le k\cdot E^+ + \lfloor k-m \rfloor \cdot E^\text{(overlap)} + (n-k)\cdot E^\text{(em)}\label{eq:cond_1}\\
&m \cdot E^+ + \lfloor n-m \rfloor \cdot E^\text{(em)}  < k\cdot E^+ + \lfloor k-m \rfloor \cdot E^\text{(overlap)} + (n-k)\cdot E^-\label{eq:cond_2}
\end{align}

These two expressions should hold for any $k,m$.

\pagebreak
Setting $k=m$ in Eq. (\ref{eq:cond_2}), we have

$$E^\text{(em)}<E^-$$

From Eq. (\ref{eq:cond_1}), if $n\ge k > m$, and after re-arranging terms, we have

$$E^\text{(em)}<E^+ + E^\text{(overlap)}$$

From Eq. (\ref{eq:cond_1}), if $n\ge m > k$, and after re-arranging terms, we have

$$E^+<E^\text{(em)}$$

Combining the above three conditions, we have 

\begin{equation}
E^+<E^\text{(em)}<\text{min}(E^-, E^++E^\text{(overlap)})
\end{equation}
which concludes the proof.

This justifies the $L_n^\text{(em)}=\left|\frac{E_\theta(\x_n^+,m_n^+,c_n^+)+E_\theta(\x_n^+,m_n^-,c_n^+)}{2}-E_\theta(\x_n^+,m_n^\text{(em)},c_n^+)\right|=|\frac{1}{2}(E^++E^-)-E^\text{(em)}|$ in Eq. (\ref{eq:objective}), where it encourages that the $E^\text{(em)}$ stays between $E^+$ and $E^-$, and penalizes the deviation of $E^\text{(em)}$ to $\frac{1}{2}(E^++E^-)$.

\end{proof}

\subsection{Network architecture of \proj}
\label{app:hicone_architecture}

For all experiments in the paper, we use the \emph{same} architecture of concept and relation EBMs, with the only difference being the number of input channels (10 for 2D images and 3 for 3D images).
We provide in Table \ref{tab:cebm} the architecture of a \cebm, which consists of several ResBlocks (Table \ref{tab:cres}). Adding the residual to the final output is denoted as Skip(). When downsampling is performed, the residual is the output of two fully-connected layers applied to a flattened input image; otherwise, the residual is the input image. Chunk() splits an input vector into two equal-sized halves and expands both halves along a 2d grid with the same dimensions as the input image. The C\_Embed() architecture is detailed in Table \ref{tab:cembed}, with c\_dim denoting the dimension of the concept embedding.

\begin{minipage}[t]{0.48\linewidth}\centering
    \captionof{table}{ResBlock($x, c$) Architecture}
    \begin{tabular}{c}
        \hline
         Type \\
         \hline
         c\_embed\_1, c\_embed\_2 $\leftarrow$ Chunk(C\_Embed($c$)) \\
         Concat($x$, c\_embed\_1) \\
        $3 \times 3$ Conv2d, 64, Spectral Norm \\ 
         Activation() \\
         Concat(out, c\_embed\_2) \\ 
         $3 \times 3$  Conv2d, 64, Spectral Norm \\
         Activation() \\
         Skip() \\
         \hline
    \end{tabular}\label{tab:cres}
\end{minipage}
\begin{minipage}[t]{0.48\linewidth}\centering
    \captionof{table}{C\_Embed Architecture}
    \begin{tabular}{c}
        \hline
         Type \\
         \hline
         Dense(c\_dim, $4 \cdot$ c\_dim) \\
         LeakyRelu(0.2) \\
         Dense($4 \cdot$ c\_dim, $4 \cdot$ c\_dim) \\
         LeakyRelu(0.2) \\
         Dense($4 \cdot$ c\_dim, 4) \\
         \hline
    \end{tabular}\label{tab:cembed}
\end{minipage}

\begin{table}[ht]
    \begin{center}
    \caption{\cebm $E_{X, M, C}(x, m, c)$ Architecture} 
    \begin{tabular}{c|c}
        \hline
         Type, \# Channels & Activation \\
         \hline
         Concat($x, m$) & (-) \\
         $3 \times 3$ Conv2d, 128 & LeakyRelu(0.01) \\  
         ResBlock (Downsample), 128 &  LeakyRelu(0.01) \\
         ResBlock, 128 & LeakyRelu(0.01) \\
         ResBlock (Downsample), 256 & LeakyRelu(0.01)\\
         ResBlock, 256 & LeakyRelu(0.01) \\
         ResBlock (Downsample), 256 & LeakyRelu(0.01)\\
         ResBlock, 256 & LeakyRelu(0.01) \\
         Global Average Pooling & (-)  \\
         Dense() $\to$ 1 & (-) \\
         \hline
    \end{tabular}\label{tab:cebm}
    \end{center}
\end{table}

\subsection{Acquiring Hierarchical Concepts}
\label{app:acquire}

We present details about the algorithm for acquiring hierarchical concepts across models and domains. 
The algorithm works as follows (see also Alg.~\ref{alg:communication})

\begin{algorithm}[t]
\caption{\textbf{Acquiring Hierarchical Concepts Between Models and Domains}}
\label{alg:communication}
\begin{algorithmic}
\STATE {\bfseries Require:} HC-EBM$_1$ and R-EBM$_1$ in domain 1, HC-EBM$_2$ and R-EBM$_2$ in domain 2, with prior-learned concepts or relations in their respective domain.
\STATE {\bfseries Require:} Image $x_1$ in domain 1, containing unseen hierarchical concept $c$.
\STATE 1: $\G_1\gets \textbf{Parse}(x_1;\text{HC-EBM}_1,\text{R-EBM}_1)$ \ \ \ // \textit{Alg. \ref{alg:parsing}}
\STATE 2: $\G_2\xleftarrow{\text{channel}}\G_1$  \ \ \ \ //\ \textit{send $\G_1$ to domain 2}
\STATE 3: HC-EBM$_2(c)\gets$\,\textbf{Compose}($\G_2$; HC-EBM$_2$, R-EBM$_2$) 
\STATE \ \ \ \ // \textit{Using Hierarchical Composition Rule (Def. \ref{def:composition_rule})}
\end{algorithmic}
\end{algorithm}

We first parse the image by decomposing it $x_1$ into a graph $\G_1$ of concepts and relations previously learned by HC-EBM$_1$ and R-EBM$_1$. Next, we copy $\G_1$ to $\G_2$ in domain 2. Finally, we compose a new HC-EBM$_2(c)$ using $\G_2$, HC-EBM$_2$ and R-EBM$_2$ (Hierarchical Composition Rule, Def.~\ref{def:composition_rule}). The most complex step of this algorithm is parsing, the first step, which is described in detail in the paper (see Alg.~\ref{alg:parsing}).

\xhdr{Note on time complexity} Note that in Alg.~\ref{alg:parsing}, although the relation EBM needs to perform classification for each pair of discovered objects, which scales as $n^2$ where $n$ is the number of objects, the process is actually very fast, since we can batch all the pairs into a single minibatch, and can get the classification result with a single SGLD run, which has the same runtime as doing inference with a relation EBM on a single image.

\begin{figure}[t]
\centering
    \vspace{-2mm}
    \includegraphics[scale=0.5]{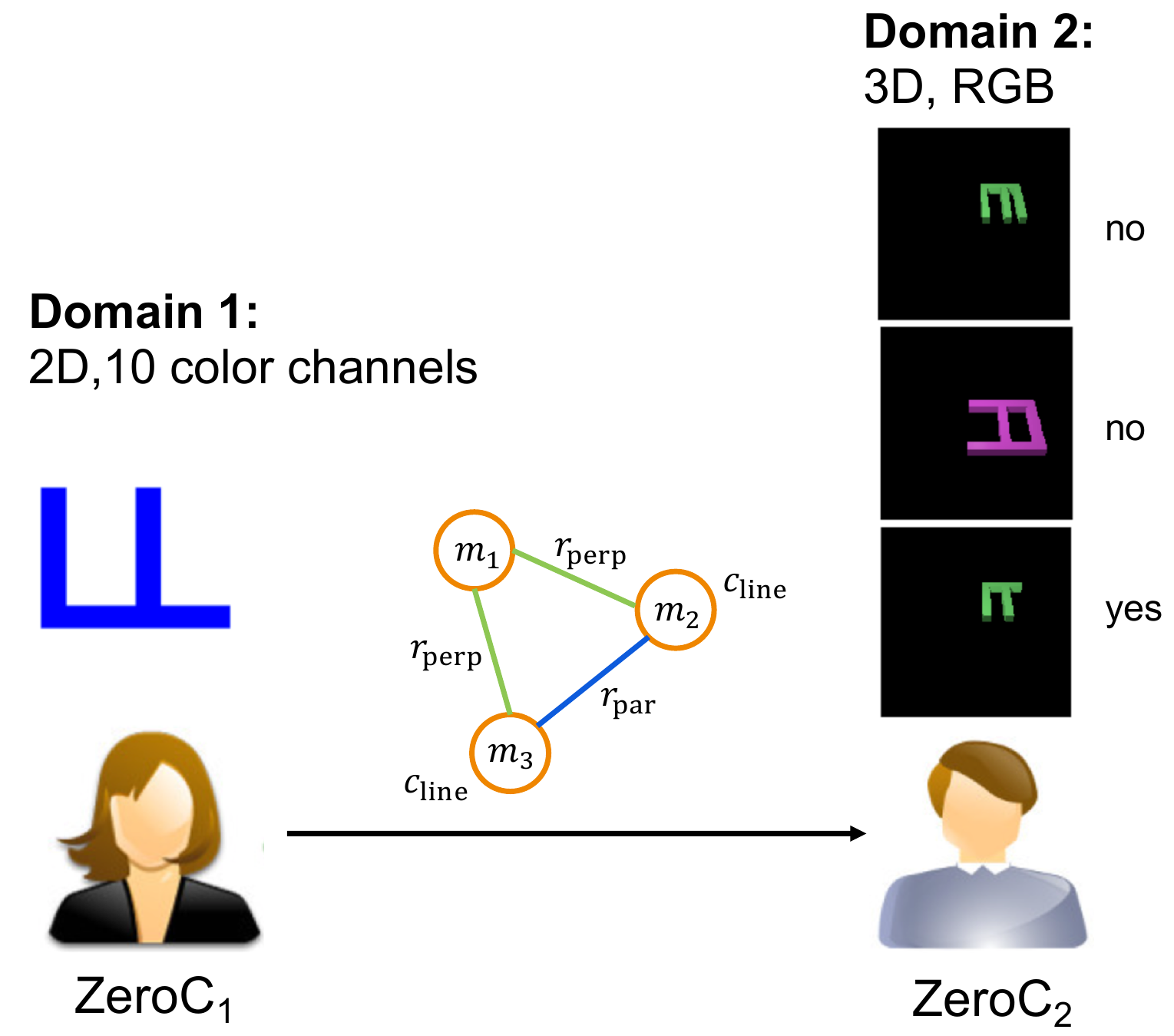}
    \caption{{\bf Acquiring Hierarchical concepts between models and domains at inference time}. ZeroC$_1$ sees the image in domain 1. It first parses it into a structure graph using Alg. \ref{alg:parsing}, then sends the graph via a communication channel to an independently trained ZeroC$_2$ in domain 2. ZeroC$_2$ can then directly classify the images in its domain.}
    \label{fig:transfer}
\end{figure}

\subsection{Implementation Details for CADA-VAE}
\label{app:cada_vae}

CADA-VAE \cite{schonfeld2019generalized} learns a common space latent space for image features and class embeddings, by aligning modality specific variational autoencoders. Alignment is encouraged by adding two regularization terms to the standard VAE loss. This enables discriminative latent features to be sampled for unseen classes and a softmax classifier to be trained on top of such features. 

To train VAE's in encoding and decoding features in the image modality, we require a pretrained backbone specific to our domain. We obtain this pretrained model by training train a network for predicting object masks and either the concept or relation labels in a self-supervised manner \cite{rajasegaran_self-supervised_2020}. Similar to the original work, we use a ResNet-12 as the backbone, which consists of 3 residual blocks of 64, 160, 320 filters, each with $3 \times 3$ convolutions. A $2 \times 2$ max pooling operation is applied after each of the first 3 blocks. Following the blocks, we have two mask prediction heads that have the same architecture. The architecture is symmetric to the ResNet-12 backbone, with 3 blocks of 320, 160, 64 filters, each with $3 \times 3$ convolutions and an upsampling layer. For classification, a global average pooling is applied after the last block. Additionally, a 4 neuron fully-connected layer is added after the final classification layer. After training the network end-to-end, we use the ResNet backbone as a pretrained feature extractor. 

Class embeddings for the Hierarchical Concept corpus consist of slots for each atomic concept and relation. The number of slots per concept / relation is equal to the maximum number of times it can appear in a hierarchical concept. A single slot assignment (setting the value of a slot from 0 to 1) corresponds to an instance of the slot's concept / relation. Multiple slots are assigned if more than one instance of the matching concept / relation is found in a class. During training, where the ground truth label exists for only one concept / relation instance, we randomly sample at each minibatch to determine which of the ground truth slot to assign. In this way, the class embedding VAE should learn encodings that are invariant to permutations of assigned and non-assigned slots. 

\subsection{Investigation of CADA-VAE performance on HD-Letter dataset}
\label{app:cada_vae_explanation}

In Table \ref{tab:classification-detection}, we see that the classification accuracy of CADA-VAE is 18.0\%, even lower than the ``Statistics'' method of 46.5\%. Here we investigate the reason. 

Firstly, we want to see if the image encoder in CADA-VAE has enough power to differentiate the different novel concept classes (``Eshape'', ``Fshape'' or ``Ashape'') during inference. We perform t-SNE on the embeddings of test images, with each color denoting a different class (the labels are unseen to the algorithm). Fig. \ref{fig:cada_investigation} shows the visualization. We see that the encoder is able to roughly separate the unseen images into different clusters that roughly correspond to different unseen concept classes. This shows that the image encoder of CADA-VAE has enough power and is not the reason for its low performance in HD-Letter. The reason then lies in the  distribution shift of the class embedding, which is a multi-hot vector indicating the presence of individual concepts (``line'') or relations (``parallel'', ``perp-mid'', ``perp-edge''). During training, the class encoder has only seen the class embeddings where only one concept or relation is present. However, during inference, a hierarchical concept (\textit{e.g.} ``Eshape'') may have up to 4 concept instances (\textit{e.g.} 4 lines) and 6 relations, thus the class embedding will have many hots activated. This constitutes a large distribution shift for the class embedding, so that CADA-VAE does not know how to interpret it. Note that this cannot be easily fixed with alternative class embedding schemes. No matter how we specify features for the classes in training and inference, the nature of our challenging HD-Letter dataset will result in a large distribution shift for the class embedding (up to 4 concept instance + 6 relation in inference vs. 1 concept/ 1 relation in training).

This shows the limitation of CADA-VAE and similar zero-shot learning methods, where they are not equipped to handle zero-shot learning of hierarchical concepts that are more complex due to the composition of learned concepts and relations. In contrast, our \proj naturally supports such composition into concepts with more complex structure, which enables zero-shot recognition of hierarchical concepts at inference time.

\begin{figure}[t]
\centering
    \vspace{-2mm}
    \includegraphics[scale=0.45]{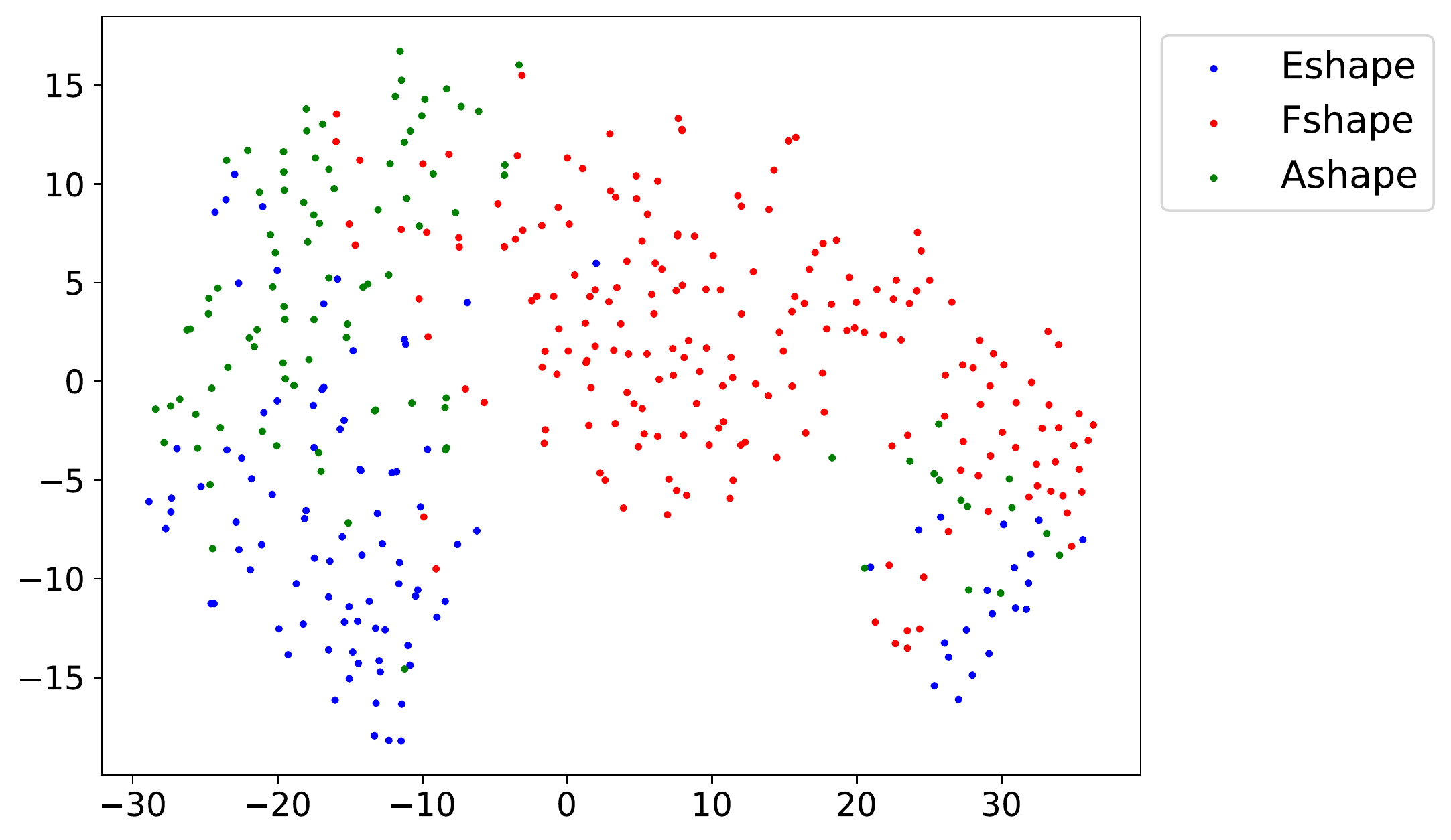}
    \caption{t-SNE visualization for the embedding of test images consisting of hierarchical concepts of ``Eshape'', ``Fshape'' or ``Ashape''. We see that the image encoder is able to roughly cluster the unseen concepts into different clusters.}
    \label{fig:cada_investigation}
\end{figure}

\subsection{Implementation Details for Mask R-CNN + Relation Classification}
\label{app:mask_rcnn}

Firstly, one may note that the performance of Mask R-CNN's graph parsing accuracy (35.5\%) is seemingly low. In fact, this performance is very good. Note that we use the stringent metric of isomorphism accuracy: the acc is 1 \emph{only} if the inferred graph is isomorphic to ground-truth, and 0 otherwise. This metric presents a major challenge even for ``simple'' shapes. In the example of parsing the graph for ``E'' shape with 4 nodes (lines), 6 edges (their relations), even if each node and edge classification acc is 0.9, the isomorphism acc is $0.9^{10}=0.349$, and 0.8 individual acc would result in  $0.8^{10}=0.107$. Thus to reach 35.5\% isomorphism accuracy, a model has to have very high accuracy for classifying individual concepts and relations.

We perform intensive hyperparameter tuning for the baseline Mask R-CNN + relation classification. We first pretrain a Mask R-CNN \cite{he2017mask} on the Hierarchical Concept corpus to minimize object segmentation and object classification losses. We then fix the Mask R-CNN weights and use the pretrained networks to output all objects in an image. Given the Mask R-CNN output for an image, the relation head is trained to minimize the relation classification loss with respect to a ground truth pair of objects and their corresponding relation. 

\textbf{Mask R-CNN}: Instance segmentation is the task of precisely detecting objects through bounding-box localization and precisely segmenting each object instance while correctly predicting its corresponding class. We make use of a popular choice for instance segmentation, Mask R-CNN, which consists of a branch for predicting segmentation masks within Region of Interests (RoI) in parallel with a branch for classification and bounding box regression. The architecture for the segmentation branch consists of four convolutional upsampling blocks, each with 256 filters. 

A vanilla Mask R-CNN architecture with a ResNet50 and Feature Proposal Network (FPN) \cite{lin_feature_2017} backbone achieves strong object localization and segmentation results on the COCO dataset \cite{lin_microsoft_2014} when trained from scratch. As COCO is much larger in scale, in terms of image resolution, number of object instances, and number of object classes, we modify the vanilla architecture to suit the Hierarchical Concept corpus's smaller image size. We use a ResNet18-FPN backbone for feature extraction, with the FPN having anchor boxes with side lengths of 4, 8, 16, 32. During training, we consider the top 2000 object proposals and reduce these to the top 1000 through NMS with a IoU threshold of 0.7. During inference, the top 20 object proposals are kept before and after NMS. We train on one GPU with a batch size of 2 for 176k iterations (equivalent to 8 epochs), with a learning rate of 0.0025. 

To tune the standard Mask R-CNN architecture that works on much larger image resolutions, we decreased the size of the backbone ResNet, decreased the size of anchor boxes, and increased the scales for the RoIPooler relative to the input image. We found that decreasing the number of proposals before NMS did not improve performance, nor did decreasing the size of the mask upsampling head. We also tuned the learning rate across values in the set \{0.005, 0.001, 0.0025, 0.05\}, with 0.0025 yielding the best performance. Our Mask R-CNN performance for detecting lines on a subset of 1000 training images is 97.4 mIoU. We found that this performance is not significantly affected  by increases in learning rate warmup length or by increases in the number of iterations before learning rate decay.

\textbf{Relation Head:} The architecture of the relation classifier consists of three residual blocks, followed by a fully-connected network with two hidden layer. Each residual block consists of 3x3 convolutions with spectral normalization, followed by downsampling. The relation head predicts the relation between all pairs of object masks outputted by the Mask R-CNN. To train the relation head, we obtain the predicted object masks that are closest (in terms of IoU) to the pair of ground truth masks and compute the loss on the predicted relation for the corresponding Mask R-CNN object masks. We tuned the learning rate of the relation head across values in the set \{5e-5, 2.5e-5, 1e-5\} and found that 2.5e-5 gave the best performance. We found that fixing the Mask R-CNN weights was essential to stable training of the relation head. The accuracy of our relation classifier is 94.5\% on the training set. 

\subsection{The Hierarchical Concept Corpus}
\label{app:dataset}

\begin{table}[t]
\centering 
\resizebox{0.65\linewidth}{!}{%
\begin{tabular}{rl}\hline
  \textbf{Concept} & \textbf{Description} \\ \hline
  line  & an object represents a solid line \\
  rectangle  & an object represents a hollow rectangle \\
  rectangleSolid & an object represents a solid rectangle \\
  L-shape & an object represents a shape that is ``L'' like \\
  C-shape & an object represents a shape that is ``C'' like \\
  A-shape & an object represents a shape that is ``A'' like \\
  E-shape & an object represents a shape that is ``E'' like \\
  T-shape & an object represents a shape that is ``T'' like \\
  rand-Shape & an object that is randomly constructed \\
  \hline
  \end{tabular}
}
\\
\resizebox{0.65\linewidth}{!}{%
\begin{tabular}{rl}\hline
  \textbf{Relation} & \textbf{Description} \\ \hline
  inside  & object x is inside of object y \\
  enclose & objects x is enclosed by object y \\
  parallel  & objects x is parallel with object y \\
  perp-mid  & object x is perpendicular with object y, \\
   & and touch object y in its middle. \\
  perp-edge & object x is perpendicular with object y, \\
   & and touch object y in its edge. \\
  non-overlap  & objects x and y are not overlapped in \\
  & both x and y axes. \\
  \hline
  \end{tabular}
}
\caption{Supported primitive concepts and relations in our Hierarchical-Concept corpus.}\label{tab:concept-relation}
\end{table}

In this section, we describe our Hierarchical-Concept corpus for our experiments and how it is generated. Our data generation framework is designed for generating large-scale datasets for concept and relation learning in a grid-world setting. Specifically, it samples pixel-level arbitrary objects, and places on to grid-world with predefined relations between objects. The task format is inspired by the Abstract Reasoning Corpus (ARC) proposed by \cite{chollet2019measure}. 

\Figref{fig:hc-letter-examples} to \Figref{fig:hd-examples} shows  examples for concepts and relations used in our experiments, for the HD-Letter and HD-Concept datasets, and a dataset of 2D to 3D concept transfer that tests acquiring hierarchical concepts between domains at inference time. For both HD-Letter and HD-Concept, there are 44000 examples of concepts, split 10:1 for train and validation, and 44000 examples for relations, split 10:1 for train and validation. At inference time for hierarchical concepts, the classification task has 200 examples, and detection task has 600 examples, 200 for each of the hierarchical concepts. The 2D to 3D transfer dataset has  200 tasks. We define our primitive objects and relations in \Tabref{tab:concept-relation}. 

To generate the datasets\footnote{
The code for generating the dataset can be found at project website  \url{http://snap.stanford.edu/zeroc/}.}, our engine consists of the following components:

\paragraph{Concept (Object)} As shown in~\Tabref{tab:concept-relation}, we define several shape primitives such as line, reactangle, L-shape, and random shape. Our data generation framework allows configurations such as color, width, height and orientation. We allow nine different colors, and four different orientations at maximum. In addition, we have composite objects as well. For instance, the ``Lshape'' consists of two lines with a fixed relation between them. We evaluate our models with these composite objects during inference time to evaluate their performance on recognizing composite objects based on primitive concepts. 

\paragraph{Relation}  As shown in~\Tabref{tab:concept-relation}, we define several relation primitives such as ``inside'', ``enclose'', and ``parallel''. We use these relations to define the spatial relations between objects. For example, ``inside'' means one object is inside of the other object. A pair of objects may formulate multiple relations between them. Likewise, a pair of objects may be \emph{unrelated} as well giving there is no primitive relation between them. 

\paragraph{Canvas} Each of our examples in the corpus consists a canvas, where objects are positioned with relations between them. Our canvas is a $n \times n$ grid world, where each pixel in the grid world is a colored pixel. Our data generation framework can place objects onto the canvas with desired relations. For example, our framework can generate two objects where one of them is inside of the other. In the process, our framework samples a rectangle, and sample another object to be placed inside the rectangle. In the meantime, the framework allows multiple object pairs to be defined when placing objects. In addition, the framework allows objects to be specified with pre-defined attributes including shape, color, width, height and orientation. We also allow other configurations such as whether we allow objects to touch each other, and whether objects have unified color. 

\paragraph{Generation Process and Artifacts} To generate training and evaluation sets for our experiments, we specify configurations of the canvas and repeatedly randomly generate canvas till we have enough examples. In addition, we allow distractor sampling, where we  specify the number and the shape of distractors. Then, our data generation framework places distractors at random on the canvas. Our framework parses the canvas and the relations between objects after all objects are placed onto the canvas. Notice that the set of the relations between objects after the placement is a superset of the pre-defined configuration when creating the canvas. For example, if we sample two objects sharing the same color along with another random distractor. The relation between the distractor and one of the object can be free-formed. We disallow non-flatten configuration when creating canvas. For example, if there are three objects to be placed, we disallow circular relations specified between them (\textit{i.e.}, there exist a relation between each two objects). We disallow two objects touching each other share the same color. To generate 3D images, we first generate the 2D images, and then use a standard tool of  povray\footnote{\url{https://github.com/POV-Ray/povray/tree/latest-stable}} to build 3D scenes based on the contour of the 2D image. We make sure that in the 2D to 3D transfer dataset, there is no overlap between the generated 2D images and the images used to generate 3D images by using a distinct seed.

\paragraph{Design of dataset HD-Concept} The main goal of this dataset is to test whether our method can detect different relational graph structure given the same number of concept nodes. Concretely, given 2 ``rectangle'' concept instances and one ``E-shape'' instance, there are limited ways to form a compositional concept with different relational graph structures: Concept1 is where ``E-shape'' is not inside any of the ``rectangle''. Concept2 is ``E-shape'' and ``rectangle''$_1$ are both inside the other ``rectangle''$_2$, and ``E-shape'' not inside ``rectangle''$_1$. Concept3 is that ``Eshape'' is enclosed by one ``rectangle'' which is also enclosed by another ``rectangle''. All three compositional concepts have the same number of constituent concepts but different relation graphs. 

Example images datasets are shown in Fig. \ref{fig:hc-letter-examples}, Fig. \ref{fig:hc-concept-examples} and Fig. \ref{fig:hd-examples}.

\label{app:baselines}

\begin{figure}[t!]
\centering
     \includegraphics[width=0.95\textwidth]{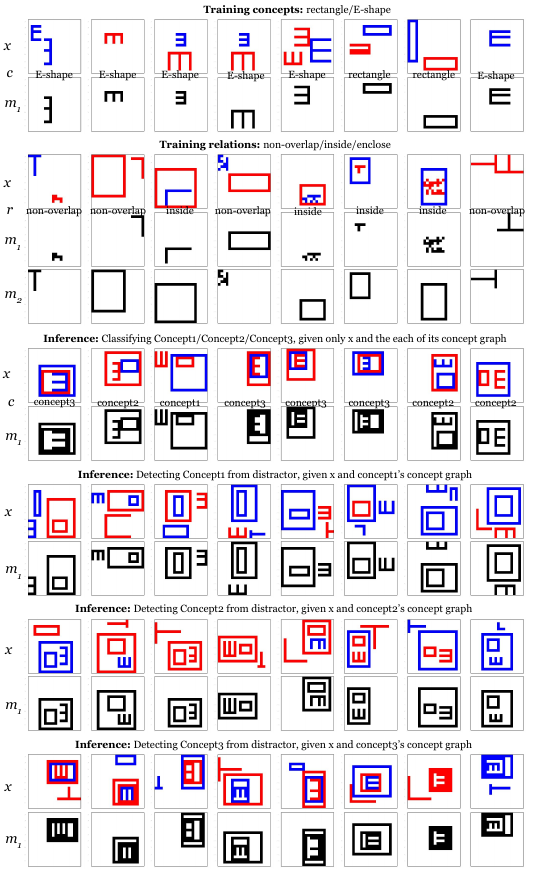}
      \caption{Samples examples from the HD-Concept dataset for training and inference. The models are trained with concept of ``E-shape'' and ``rectangle'', and relations of ``non-overlap'', ``inside'', ``enclose'' (inverse in ``inside'', if exchanging two masks). At inference, the models are tasked to perform classification and pixel-wise detection, on hierarchical concepts (w.r.t. training) of ``Concept1'', ``Concept2'' and ``Concept3'' (see the bottom 3 panel). In this dataset, during training relation, the pair of objects does not appear in inference, testing if the relation can generalize to new objects.}
       \label{fig:hc-concept-examples}
\end{figure}

\begin{figure}[t!]
\centering
     \includegraphics[width=1\textwidth]{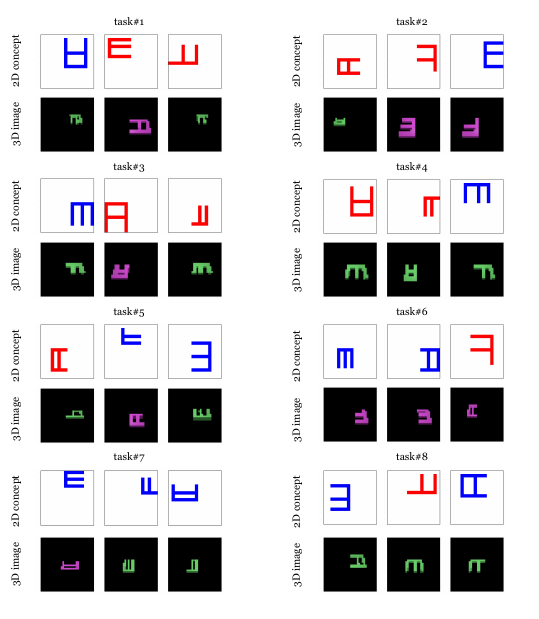}
      \caption{ Examples from our Hierarchical-Concept corpus for experiments of acquiring hierarchical concepts between domains. In each panel, the upper three images demonstrate the concepts in domain 1, and after communicating high-level knowledge, independently trained models in domain 2 need to perform classification and detection on its own domain.}
       \label{fig:hd-examples}
\end{figure}

\subsection{Limitations of current work}
\label{app:limitations}

In this work, we have demonstrated the zero-shot concept recognition and acquisition capability of \proj, with experiments in a grid-world domain. We focus on grid-world since it provides a systematic and challenging testbed to evaluate the above two capabilities of models, similar to many other pioneering works that evaluate their models in grid-world that captures the essence of the problem (\textit{e.g.} RL in early days \cite{sutton1998sutton}, PrediNet \cite{shanahan2020explicitly}, program synthesis evaluated with Karel dataset \cite{shin2018improving}, BabyAI \cite{chevalier2018babyai}, Machine Learning Theory of Mind \cite{rabinowitz2018machine}, etc.) Nevertheless, the fact that our main experiment is in grid-world is a limitation of current work. In the main Sec \ref{sec:acquiring_exp}, we train ZeroC$_2$ with 3D images, and the fact that ZeroC$_2$ is able to zero-shot classify and detect in its domain demonstrate that \proj is able to handle more complex 3D images. Moreover, in Appendix \ref{app:additional_exp}, we perform additional zero-shot classification with a variant of CLEVR, which shows that our \proj is able to handle more realistic 3D images, and out-perform strong baseline of CADA-VAE.

Another limitation of current work is that we have only considered one hierarchy of composition, without considering more hierarchies. Although more hierarchies is in principle possible with the \proj method, it is out of scope of this work, since in this work we focus on demonstrating \emph{whether} zero-shot recognition and acquisition of concepts are possible with our model. It is an exciting future work to build on current work, to explore zero-shot recognition with multiple levels of hierarchies. 

Our \proj architecture naturally supports continued expansion or compression of the EBM pool, as newly learned compositional concept EBMs can be dynamically added to the pool. Independently trained EBMs on new concepts/relations can also be added to the pool and composed together with existing EBMs. This is an exciting future direction, but is out-of-scope of the present paper, since here we focus on proposing the framework and demonstrate the zero-shot recognition and acquisition capability of \proj.

\subsection{Broader social impact}
\label{app:social_impact}

Here we discuss the broader social impact of our work, including its potential positive and negative aspects. On the positive side, the capability of \proj enables will improve the generalization capability of deep learning models, allowing them to acquire concepts and address more diverse tasks at inference time. This provides a possible method to address the long-standing problem that deep learning models has limited generalization capability and mainly learn via examples. Our \proj also improves interpretability of models, since we can know exactly the structure of the concepts the model learns, allowing us to know \emph{how} the model makes such decision.

We see no obvious negative social impact of this work. In its current state, \proj's capability is still very limited, not nearly addressing any of the tasks as good as human level. It will only improve the interpretability and versatility of the models, which can be used to better address challenging tasks in society.

\subsection{Additional experiments}
\label{app:additional_exp}

\begin{figure}[h!]
\centering
    \includegraphics[width=0.8\textwidth]{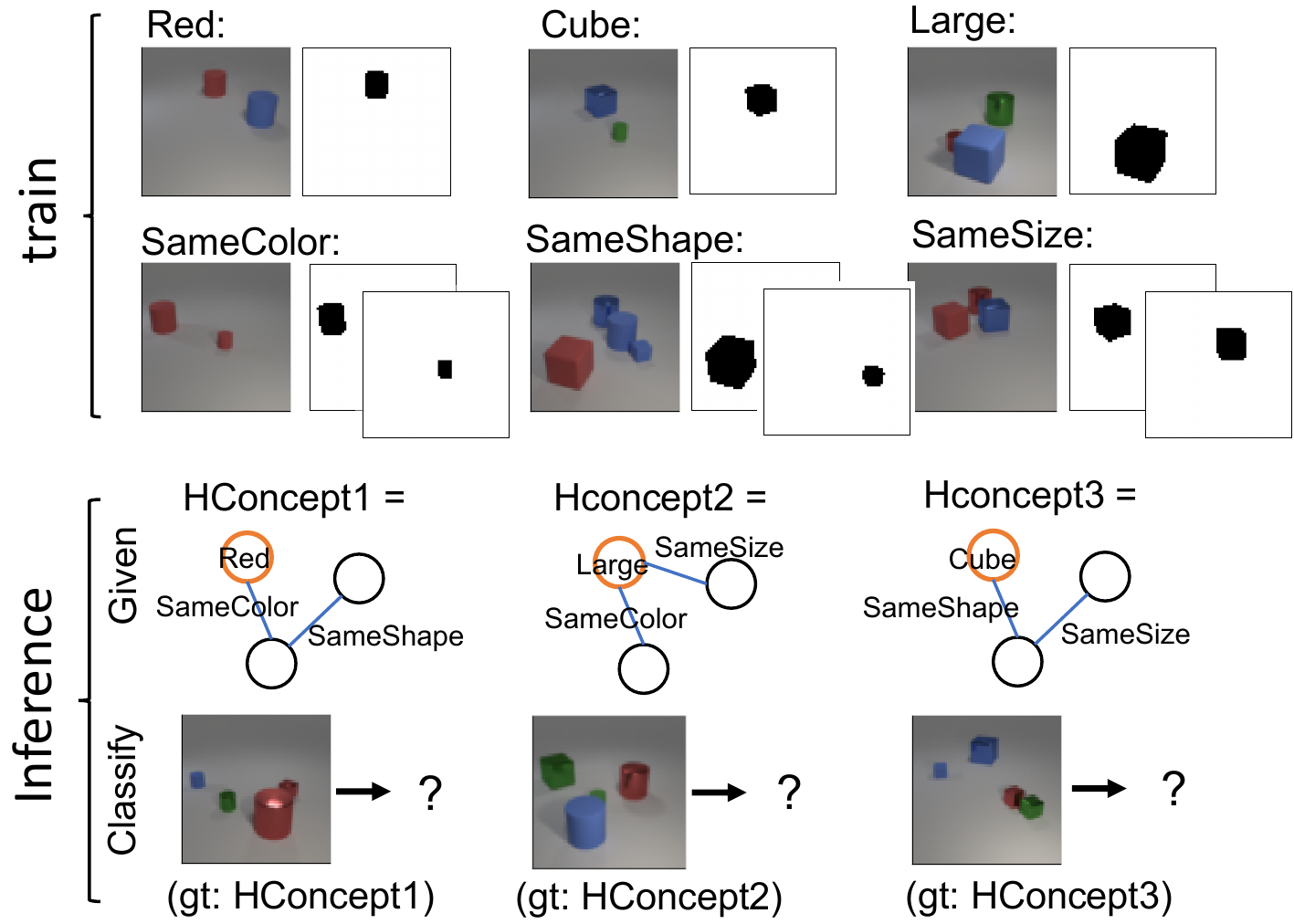}
\caption{Zero-shot classification task based on CLEVR. Here during training, the models are given only tuples of ($image$, $mask(s)$, $label$) for elementary concepts (Red, Cube, Large) or elementary relations (SameColor, SameShape, SameSize). In inference, the models are asked to perform zero-shot classification of a hierarchical concept. For example, in lower left figure, the models are asked to classify whether an image contains HConcept1 (True if the image contains three objects where one object is red, the second object has the same color as the first object and the third object has the same shape as the second object, and False otherwise.) HConcept2 and HConcept3 have similar form of interpretation. This is a challenging task since in inference, the concepts to classify are more complicated than that in training. }
\label{fig:clevr}
\end{figure}

Here we perform additional experiments with a CLEVR dataset to explore whether \proj can generalize to more realistic images. The models are trained with basic concepts of ``Red'', ``Cube'', ``Large'' and relations of ``SameColor'', ``SameShape'', ``SameSize'' (See Fig. \ref{fig:clevr}), where the models are provided with tuples of ($image$, $mask(s)$, $label$) for concept or relation. During inference, given only symbolic specification of three hierarchical concepts, the model needs to zero-shot classify whether an image contains an instance of such concept. For example, HConcept1 is defined as three objects where one object is red, the second object has the same color as the first object and the third object has the same shape as the second object. The other two hierarchical concepts have a similar structure of specification. We used 100K images for training and 200 for inference (evaluation). The table  below shows the results of our model and comparison with CADA-VAE and the statistics baseline. We see that \proj achieves significantly higher performance on the task than the baselines, and able to zero-shot classify more realistic images.

\begin{table}[h!]
\centering
\small
\resizebox{0.5\textwidth}{!}{
\begin{tabular}{c|c}
\hline
Model    & Classification acc (\%)   \\ \hline
Statistics & 33.4  \\ 
CADA-VAE & 45.3 \\ \hline
\proj (ours) & \textbf{56.0} \\\hline
\end{tabular}
}
\label{tab:clevr}
\end{table}

\subsection{Generality of ZeroC approach}
\label{app:generality}
The ZeroC framework is quite general. The generality lies in the following two folds:

\paragraph{Generality of learning the elementary concepts and relations} The EBMs in the ZeroC framework can learn general primitives as long as labeled data is provided to demonstrate the concept or the relation, even if that concept or relation is a range that contains some intrinsic variation. For example, to learn the ``acute angle'' relation between two lines (with varying angles), ZeroC only needs a dataset that contains many (image, mask1, mask2, ``accute-angle'') tuples where the mask1 and mask2 identify the two lines in the image that form an acute angle, with different examples containing different angles. In other words, as long as the dataset contains enough data that identifies a concept/relation in a certain range, the ZeroC can learn such primitives. This is also shown in the HD-Concept dataset in Section 3.1, where ``inside'', ``outside'', ``non-overlap'' relation primitives are learned, and each relation has intrinsic variation. For example, the two masks for ``inside'' relation can have different positions, relative positions, and sizes.

\paragraph{Generality to different datasets and scenarios}
Our architecture is general to learn diverse concepts and relations. For all our experiments, we use the \emph{same} network architecture (Appendix \ref{app:hicone_architecture}), for the (1) dataset HD-Letter, (2) dataset HD-Concept (that contains more complex concepts and relations e.g. ``rectangles'', ``Eshape'', ``inside'', ``enclose''), (3) Section \ref{sec:acquiring_exp} ``Acquiring Novel Hierarchical Concepts Across Domains'', and (4) CLEVR experiment in Appendix \ref{app:additional_exp}. This shows that the algorithm is very general, not tuned toward specific concepts or scenarios. The architectures only differ in the number of input channels (since the 2D images have 10 channels and 3D images have 3 RGB channels). For future work, we can also experiment with using a single \emph{model} to learn concepts or relations across datasets, which is out-of-scope of current work.

\subsection{Scalability of ZeroC}
\label{app:scalability}

\paragraph{Scalability to task complexity}
We have demonstrated that even for larger images like the 3D image (32x32x3) in Section \ref{sec:acquiring_exp} and CLEVR in Appendix \ref{app:additional_exp} (64x64x3), our approach achieves reasonable accuracy, significantly outperforming baselines. This shows the scalability of our methods to larger images and realistic use cases. In addition, downsampling the images to lower resolution can be performed to reduce the complexity of the inference and learning.

\paragraph{Scalability in terms of time complexity}

In terms of time complexity, the SGLD inference algorithm (Alg. \ref{alg:SGLD}) uses a fixed number of iteration steps $K$. We find that $K=60$ is enough for reasonable detection accuracy for larger images. For larger images, a single step of SGLD make take slightly longer to run due to the larger image size. For parsing hierarchical concept from image (Alg. \ref{alg:parsing}), the detection of all concept instances can be obtained for a single SGLD run, and for the classification of relations, we can concatenate all pairs of concept masks into a single minibatch and feed into the relation-EBM, which only requires one relation-EBM forward run, which is instant. Thus we see that the time complexity is fairly constant for increasing number of objects in the images and larger image size, measured in terms of a single forward or SGLD step.

\paragraph{Potential application to real world images} In order to scale to recognizing real-world objects, the ZeroC architecture and pipeline will be able to do that in principle. The main bottleneck is presented by labeled data, as we need to have detailed labels for many elementary concepts and relations that constitute real world objects. For example, the CUB-200-2011 dataset \cite{wah2011caltech} provides annotations for elementary concepts for birds, e.g. beak, belly, tail, etc. This dataset lacks relation annotations, so is unsuitable for our pipeline. A suitable dataset for our pipeline can be like an augmented dataset to the above CUB-200-2011 dataset that also has relation annotations like ``connect-to'', ``up'', ``down'', ``extend'', etc. With such annotations, we can learn both concept EBMs and relation EBMs and compose together to recognize compositional concepts like different species of birds.

\subsection{Computational complexity of ZeroC}
\label{app:complexity}

The computational complexity of the inference is detailed in Appendix \ref{app:scalability}. In summary, the run time remains fairly constant increasing number of objects in the images and larger image size, measured in terms of a single forward or SGLD step. Since learning take the inference as an inner loop, it also remains fairly constant for the same model structure. For increasing EBM model size, the increased number of parameters will definitely require more time to train, as is typical for deep learning models. Empirically, we observe that for the same model architecture, it takes similar number of epochs to learn reasonably for different datasets.

\end{document}